\definecolor{winkcolor}{RGB}{0,0,150}     
\definecolor{kTPGucolor}{RGB}{150,0,0}    
\newenvironment{proofsketch}{\begin{proof}[Proof sketch]}{\end{proof}}
\Crefname{equation}{Eq.}{Eqs.}
\Crefname{figure}{Fig.}{Figs.}
\Crefname{tabular}{Tab.}{Tabs.}
\newtheorem{theorem}{Theorem}
\newcommand{\mymethod}{WinkTPG\xspace}
\newcommand{\planner}{kTPG\xspace}
\newcommand{\uncertainPlanner}{kTPGu\xspace}
\newcommand{\repolink}{https://github.com/JingtianYan/IEEE-T-ASE-WinkTPG}
\begin{document}

\title{WinkTPG: An Execution Framework for Multi-Agent Path Finding Using Temporal Reasoning}


\author {
    Jingtian Yan,~\IEEEmembership{Graduate Student Member,~IEEE},
    Stephen F. Smith,~\IEEEmembership{Member,~IEEE},\\
    Jiaoyang Li,~\IEEEmembership{Member,~IEEE}
\thanks{Manuscript received: December, 24, 2025; Revised March, 20, 2026; Accepted April, 19, 2026. This paper was recommended for publication by Editor Paolo Rocco upon evaluation of the Associate Editor and Reviewers' comments. This work was supported by the National Science Foundation under Grants \#$2328671$ and \#$2441629$, as well as a gift from Amazon. (Corresponding author: Jingtian Yan.)}
\thanks{$^{1}$J. Yan, S. Smith, and J. Li are with Robotics Institute, Carnegie Mellon University, Pittsburgh, PA 15213, USA {(email: \{jingtiay, sfs, jiaoyanl\}@cs.cmu.edu)}.}
\thanks{Digital Object Identifier (DOI): see top of this page.}
}

\markboth{IEEE TRANSACTIONS ON AUTOMATION SCIENCE AND ENGINEERING. PREPRINT VERSION. ACCEPTED APRIL, 2026}%
{Yan \MakeLowercase{\textit{et al.}}: WinkTPG: An Execution Framework for Multi-Agent Path Finding Using Temporal Reasoning}


\maketitle

\begin{abstract}
Planning collision-free paths for a large group of agents is a challenging problem in many real-world applications.
While recent advances in Multi-Agent Path Finding (MAPF) have shown promising progress, standard MAPF planners continue to rely on simplified kinodynamic models, preventing agents from directly following the generated MAPF plan.
To bridge this gap, we propose kinodynamic Temporal Plan Graph planning (\planner), a multi-agent speed optimization algorithm that efficiently refines a MAPF plan into a set of kinodynamically feasible speed profiles. We further incorporate execution timing uncertainty models and provide deterministic guarantees under bounded uncertainty models and probabilistic guarantees under stochastic models.
Building on \planner, we propose Windowed \planner (\mymethod), a MAPF execution framework that incrementally refines MAPF plans using a window-based mechanism, dynamically incorporating agent information during execution to reduce uncertainty.
Experiments show that \mymethod can generate speed profiles for up to 1,000 agents within 1 second and improves solution quality by up to 51.7\% over existing MAPF execution methods.
We further validate \mymethod in high-fidelity physics-based simulation and on real-world robots.
\end{abstract}

\def\abstractname{Note to Practitioners}
\begin{abstract}
The motivation of this article originates from the need to execute large-scale multi-agent path-planning solutions reliably in practical applications such as warehouse logistics, industrial material transport, and factory or airport automation.
Although discrete MAPF planners can
generate conflict-free paths, their solutions are often not directly executable by real robots due to kinodynamic constraints and execution timing uncertainties. This article develops WinkTPG, a windowed execution framework that converts MAPF plans into safe and kinodynamically feasible speed profiles while maintaining precedence relations implied by the discrete plan. WinkTPG further employs receding-horizon replanning to correct timing deviations caused by disturbances or sensing noise, making it suitable for practitioners deploying multi-robot fleets in constrained, time-critical environments.
\end{abstract}

\begin{IEEEkeywords}
Multi-Agent Path Finding, Multi-Robot System, Multi-Robot Execution.
\end{IEEEkeywords}


\section{Introduction}
\IEEEPARstart{C}{ollision-free} coordination of multiple robotic agents operating in the same physical space is an important task in numerous real-world applications, including automated warehouses~\cite{wurman2008coordinating,ma2017feasibility}, traffic intersections~\cite{li2023intersection}, and airports~\cite{morris2016planning}.
Multi-Agent Path Finding (MAPF) methods offer significant advantages for solving this problem, such as scalability and (bounded) optimality guarantees.
However, applying MAPF in these domains remains challenging:
most MAPF methods ignore real-world factors like kinodynamics (e.g., speed and acceleration limits) and execution timing uncertainty (i.e., variability in traversal times due to controller noise, actuation latency, and slippage), making it impractical for agents to strictly follow the planned paths.

One promising solution is to postprocess the plan generated by MAPF methods. Honig et al.~\cite{honig2016multi} introduce the Temporal Plan Graph (TPG), which refines the speed profiles of agents to meet their speed limits while preserving the passing orders of agents at different locations.
While TPG ensures collision-free and deadlock-free movement, it cannot handle kinodynamic constraints beyond speed limits. That is, it may generate speed profiles with infinitely large accelerations, making them infeasible for agents to follow.
Two extensions of TPG overcome this problem: the Kinodynamic Network (KDN)~\cite{zhang2021temporal} and the Action Dependency Graph (ADG)~\cite{honig2019warehouse}. KDN incorporates kinodynamic constraints into TPG but suffers from limited scalability and cannot handle execution timing uncertainty.
In contrast, ADG is a scalable execution framework that transforms a MAPF plan into a dependency graph and coordinates agents through event-triggered execution. However, each agent in an ADG plans its own speed profile independently, without accounting for the planned or predicted motion of other agents.
Instead of jointly coordinating speed profiles, each agent reacts only after its dependencies are formally resolved, ignoring the future motion of other agents even when it could be anticipated. Consequently, ADG often under-utilizes available temporal flexibility, causing unnecessary deceleration and reducing overall execution efficiency even when globally coordinated motion would be feasible.

To overcome this limitation, we propose \textbf{Win}dowed \textbf{K}inodynamic \textbf{T}emporal \textbf{P}lan \textbf{G}raph planning (\mymethod), an efficient execution framework for MAPF.
Unlike event-driven methods such as ADG, which make execution decisions for each agent independently without global coordination, \mymethod jointly plans speed profiles for all agents under the temporal structure induced by the TPG.
Given a set of collision-free paths, \mymethod first constructs a TPG to encode the passing orders of agents at shared locations.
It then applies the proposed kinodynamic planner, \planner, to generate coordinated speed profiles for all agents. The key idea of \planner is to use \emph{reserved intervals} to allocate shared locations to competing agents over time. These intervals specify when agents are allowed to traverse particular locations, based on kinodynamic reachability and the precedence structure of the TPG. By propagating and updating reserved intervals across agents, \planner captures global temporal coupling among agents rather than making purely local, event-triggered decisions. This enables \planner to optimize the motions of agents in a coordinated manner and to reduce the conservative behavior commonly incurred by ADG.
To improve robustness under modeled execution timing uncertainty, we incorporate a safety-margin mechanism into \planner.
Under bounded execution timing uncertainty models, our method guarantees deterministic precedence satisfaction and collision-free execution; under a stochastic uncertainty model, it yields probabilistic guarantees.

To enhance robustness and efficiency during execution, \mymethod couples \planner with the use of a windowed execution mechanism. This mechanism uses updated information during execution to reduce uncertainty and improve overall performance through dynamic replanning.
Theoretically, we prove that \planner and \mymethod are complete. Empirically, we evaluate them on the standard MAPF benchmarks with two robot models and three execution timing uncertainty models. \mymethod shows up to 51.7\% improvement in terms of solution quality compared to ADG.
Moreover, \mymethod generates speed profiles for 1,000 agents within 1 second, whereas KDN may take over 300 seconds to generate solutions for only 200 agents.
Finally, we demonstrate the real-world feasibility of \mymethod by validating it in a high-fidelity physics-based simulation environment and on real-world robots.

\cref{fig:method_flow} provides an overview of the proposed framework and also serves as a guide to the organization of this paper.
Starting from a discrete MAPF plan, we first
formulate the multi-agent execution problem (MAEP) and introduce \planner in \cref{sec:ktpg}.
Then, in~\cref{sec:ktpgu} we present the uncertainty-aware
extension of MAEP and discuss how execution timing uncertainty is incorporated into the
planning process. Building on this formulation, we next describe \mymethod, a
windowed execution strategy that enables online replanning using updated execution information in~\cref{sec:winktpg}. Theoretical and Empirical results are presented in Sections \ref{sec:theory} and \ref{sec:exps}, respectively.

\section{Preliminaries}
Before describing our method, we first define the MAPF problem and the temporal plan graph that encodes the MAPF plan, and review related works on executing MAPF plans.

\begin{figure*}
    \centering
    \includegraphics[width=\linewidth]{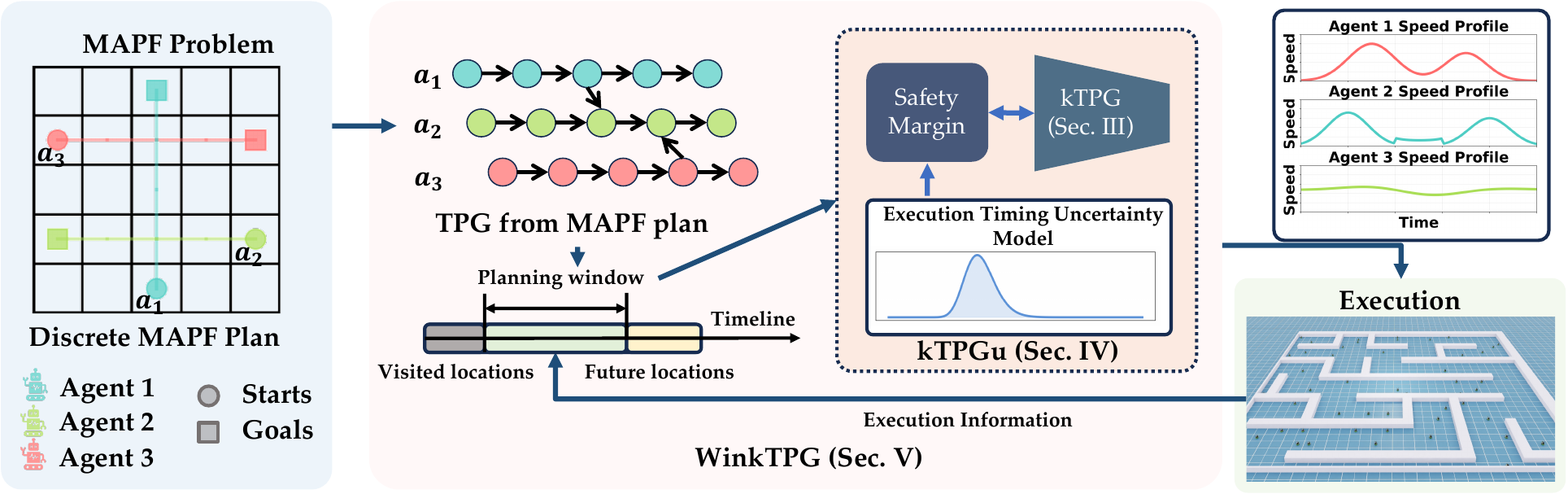}
    \caption{Overview of \mymethod. \mymethod produces robust speed profiles for all agents given a MAPF plan. A discrete MAPF plan is first converted into a Temporal Planning Graph (TPG). Kinodynamic constraints are incorporated to form kTPG (Sec. III), and execution uncertainty is modeled in kTPGu (Sec. IV).
During execution, WinkTPG (Sec. V) updates temporal windows based on real-time execution feedback, enabling adaptive and less conservative execution.}
    \label{fig:method_flow}
\end{figure*}

\begin{figure}
    \centering
    \includegraphics[width=\linewidth]{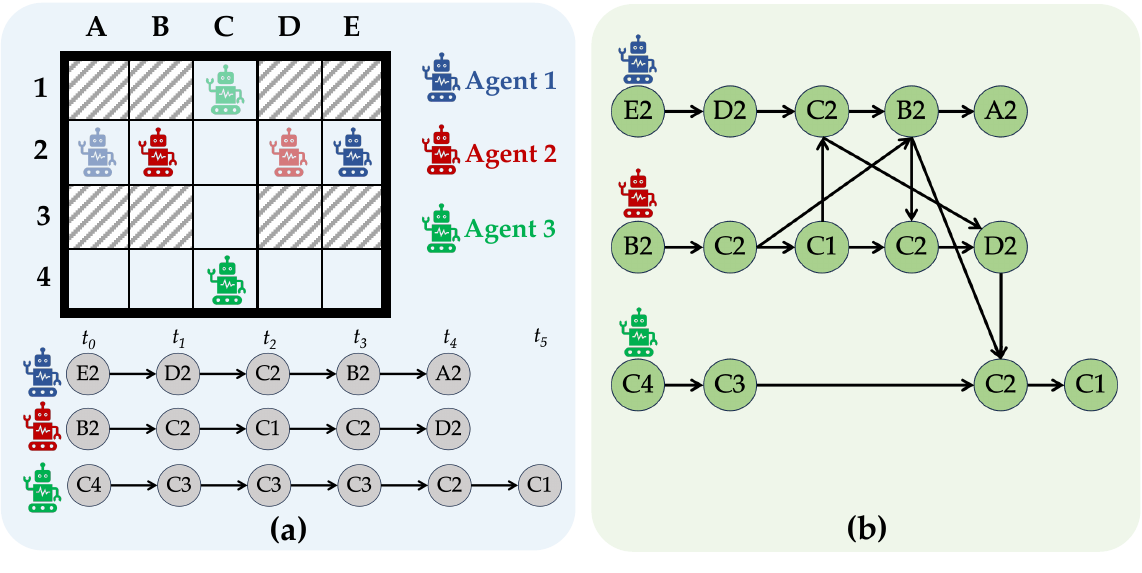}
    \caption{Example of a MAPF instance, a MAPF plan, and a TPG.}
    \label{fig:example_tpg}
\end{figure}

\subsection{MAPF Problem}\label{sec:mapf_problem}
As shown in~\cref{fig:example_tpg} (a), a MAPF problem~\cite{Stern2019benchmark} contains an undirected graph $\mathcal{G}_U = (\mathcal{V}_U, \mathcal{E}_U)$ and $I$ agents $\mathcal{A} = \{a_1, \dots, a_I\}$, each with a  start $q_i^s \in \mathcal{V}_U$ and goal $q_i^g \in \mathcal{V}_U$.
Time is discretized, and at each timestep, an agent can either move to an adjacent vertex or remain at its current vertex.
A MAPF plan is a set of collision-free paths $\mathcal{P} = \{p_1, \dots, p_I\}$, where the path of $a_i$ is a sequence of time-annotated vertices: $p_i = \{(q_i^{0}, t_0), \dots, (q_i^{z^i}, t_{z^i})\}$. Here, $q_i^{0} = q_i^s, q_i^{z^i}= q_i^g$, and for all $k \in \{1, \dots, z^i\}, q_i^{k} \in \mathcal{V}_U$ and $(q_i^{{k-1}}, q_i^{{k}}) \in \mathcal{E}_U$.
$\mathcal{P}$ is defined as collision-free if it contains
\emph{no vertex collisions}: agents must not occupy the same vertex at the same timestep, and
\emph{no edge collisions}: agents must not swap vertices at the same timestep.
To avoid confusion with the TPG, we henceforth refer to vertices in $\mathcal{V}_U$ as \emph{locations}.
To be compatible with the TPG representation, we assume that the MAPF plan is also cycle-conflict-free (i.e., no subset of agents swap each other’s locations at the same timestep~\cite{Stern2019benchmark}).

\subsection{Temporal Plan Graph (TPG)}
A Temporal Plan Graph (TPG), as shown in~\cref{fig:example_tpg} (b), is a directed acyclic graph $\mathcal{G}=(\mathcal{V},\mathcal{E}_1,\mathcal{E}_2)$ that represents a MAPF plan $\mathcal{P}$ by capturing precedence relationships for visiting locations.
The vertex set $\mathcal{V} = \{v_i^k : k \in [0, z^i], a_i \in \mathcal{A}\}$ comprises all locations to be visited sequentially by agent $a_i$ (i.e., $loc(v_i^k) = q_i^k$).
The edge set $\mathcal{E}_1$ includes \emph{Type-1 edges}, which capture sequential precedence within the path of an agent.
A Type-1 edge $(v_i^k, v_i^{k+1})$ ensures that $a_i$ visits $v_i^k$ before $v_i^{k+1}$.
The edge set $\mathcal{E}_2$ consists of \emph{Type-2 edges}, which enforce the order in which agents visit shared locations. For any two vertices $v_j^s$ and $v_i^k$ such that $loc(v_j^s) = loc(v_i^k)$, a Type-2 edge $(v_j^{s+1}, v_i^k)$ ensures that $a_i$ can enter $v_i^k$ only after $a_j$ leaves $v_j^s$ and reaches $v_j^{s+1}$.

\subsection{Related Works}
MAPF methods compute collision-free paths for large groups of agents.
Leading methods, such as PBS~\cite{ma2019searching}, PIBT~\cite{okumura2022priority}, and EECBS~\cite{li2021eecbs}, can find collision-free paths for thousands of agents.
However, these methods assume all agents move synchronously, which is unrealistic in real-world scenarios due to their reliance on simplified, discretized kinodynamic models.
This limitation is also reflected in prior studies that demonstrate the execution of MAPF plans on physical multi-robot platforms~\cite{bartak2019multi,yan2025advancing,saccon2022comparing,lehoux2024multi,wen2022cl}, highlighting the practical importance of bridging the gap between discrete MAPF plans and continuous real-world execution.
To address this issue, some methods~\cite{andreychuk2021improving,yan2024PSB,moldagalieva2024db} integrate more accurate kinodynamic models during planning.
However, these methods often face scalability challenges due to the increased computational complexity introduced by the kinodynamic models.
Another group of methods~\cite{honig2016multi,zhang2021temporal} first generates a MAPF plan with a simplified kinodynamic model and subsequently derives speed profiles with the accurate kinodynamic model based on these paths. 
For instance, KDN~\cite{zhang2021temporal} employs a TPG to encode the precedence constraints defined in the MAPF plan. It then uses a Mixed-Integer Linear Programming (MILP) solver to find kinodynamically feasible speed profiles for agents based on the TPG.

To handle execution timing uncertainty, some methods~\cite{atzmon2020robust,peltzer2020stt} account for this uncertainty during planning.
These methods first build a model of the execution timing uncertainty and then plan collision-free paths, ensuring that agents remain collision-free even when time uncertainties occur as described by the models.
While effective in some cases, these methods often lead to poor performance due to being overly conservative during the planning stage, as they cannot access the actual delays that occur during execution. Additionally, these methods are often limited in scalability.

Introducing an execution framework to MAPF is a promising method that enables scalable coordination while satisfying agent kinodynamic constraints and ensuring robust execution.
Action Dependency Graph (ADG)~\cite{honig2019warehouse} uses a TPG to ensure the passing order and defines an execution framework for speed profile generation.
Specifically, it categorizes the vertices in the graph into three statuses: \emph{staged}, \emph{enqueued}, and \emph{finished}. A vertex is marked as \emph{staged} if the agent cannot move to it yet.
It transitions to \emph{enqueued} when the agent can safely move here. This happens only if it has no preceding vertices or if the following conditions are met: (1) its preceding vertex connected by a Type-1 edge is \emph{enqueued} or \emph{finished}, and (2) all preceding vertices connected by Type-2 edges are \emph{finished}.
Once a vertex transitions to \emph{enqueued}, ADG generates a speed profile for the agent to reach the vertex.
A vertex is marked as \emph{finished} when the agent has reached it.
ADG dynamically updates vertex statuses during execution and provides robustness guarantees under execution timing uncertainty.

Although ADG provides robustness guarantees, it does not explicitly account for the speed profiles and execution timing of other agents when reasoning dependencies. 
Because dependency satisfaction is determined solely by predecessor completion events, ADG cannot reason about when a shared location will become temporally feasible based on ongoing execution of other agents. 
As a result, it must adopt a conservative strategy in which an agent waits until all required predecessors have formally finished, even when their planned speed profiles already imply conflict-free traversal.
This conservative strategy can lead to unnecessary deceleration.
As shown in~\cref{fig:intro_idea}, R2 (blue) must wait until R1 (green) leaves cell B2 before it can proceed. Although it is evident early on that R1 will leave B2 before R2 arrives, ADG does not allow R2 to commit to the optimal speed profile until the dependency is officially satisfied.
This results in R2 decelerating prematurely (top chart in (c)), compared to the optimal case (bottom chart) where it could have maintained maximum speed.
The problem is worsened by potential communication delays in real-world settings, where agents rely on frequent updates from a central server. 
Consequently, this overly conservative strategy can result in poor performance in many scenarios.

\begin{figure}
    \centering
    \includegraphics[width=.95\linewidth]{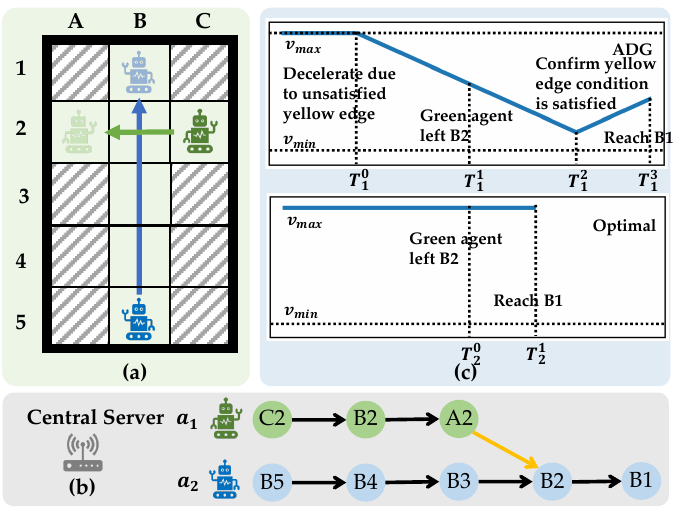}
    \caption{
    (a) MAPF instance with a green agent (R1) moving from C2 to A2, and a blue agent (R2) moving from B5 to B1.
    Both agents have an initial speed of $v_{max}$. 
    (b) TPG generated by the central server, where the yellow edge indicates that R2 can only move to B2 after R1 has left it.
    (c) ADG-generated and optimal speed profiles for R2.}
    \label{fig:intro_idea}
\end{figure}

\section{Kinodynamic TPG (kTPG)} \label{sec:ktpg}
\cref{sec:maep} below specifies the problem addressed in this section. 
In~\Cref{sec:ktpgu,sec:winktpg}, we will extend this definition to account for execution timing uncertainty during agent movement.

\subsection{Multi-Agent Execution Problem (MAEP)} \label{sec:maep}
Given a MAPF plan, the objective of MAEP~\cite{zhang2021temporal} is to generate a set of speed profiles for all agents to follow during execution. These profiles must satisfy the kinodynamic constraints imposed by the agents' physical models, respect the precedence relationships defined in the MAPF plan, and preserve collision-free movement. The objective is to minimize the total time for all agents to reach their goals.

Let \(A = \{a_1, \dots, a_I\}\) denote the set of agents defined in~\cref{sec:mapf_problem}, now operating in a continuous workspace \(\mathcal{W} \subset \mathbb{R}^d\), with \(d \in \{2,3\}\). Given a MAPF plan \(\mathcal{P} = \{p_1, \dots, p_I\}\), we reinterpret each discrete path \(p_i\) as a sequence of waypoints in continuous space.
Specifically, for agent \(a_i\), its path is given by $p_i = \big\{(q_i^0, t_i^0), (q_i^1, t_i^1), \dots, (q_i^{z_i}, t_i^{z_i})\big\}$, where each \(q_i^k\) corresponds to a vertex in the MAPF graph $\mathcal{G}_U = (\mathcal{V}_U, \mathcal{E}_U)$ and represents a location in $\mathcal{W}$.
During execution, each agent is required to spatially follow its assigned path,
i.e., visit the waypoints in order and move along straight-line segments between consecutive waypoints.
To execute its path in continuous space, each agent $a_i$ follows a continuous
trajectory $x_i(\cdot)$ that must satisfy all kinodynamic and safety constraints imposed by its physical model.
We capture these limits by an admissible trajectory set:
\[
\mathcal{X}_i
=
\Big\{
x_i : [0,\infty) \to \mathbb{R}^{d_i}
\;\Big|\;
C_i\big(x_i(\cdot)\big) \le 0
\Big\},
\]
where $d_i$ is the dimension of the state space of agent $a_i$, and \( C_i(\cdot) \) is a constraint operator encoding the kinodynamic feasibility requirements of \(a_i\), including bounds on velocity, acceleration, jerk, actuator capabilities, and other safety constraints. The operator \(C_i(\cdot)\) may represent pointwise-in-time constraints (e.g., \( \|\dot{x}_i(t)\| \le v_{\max} \), \( \|\ddot{x}_i(t)\| \le a_{\max} \)) as well as integral or state-dependent conditions. A trajectory is considered kinodynamically feasible if and only if it belongs to \( \mathcal{X}_i \).
We assume that waiting at waypoints is always kinodynamically feasible under \(C_i(\cdot)\).
We assume that the spatial paths produced by the MAPF planner are executable, in the sense that for each agent $a_i$ there exists at least one continuous trajectory $x_i(\cdot)\in\mathcal{X}_i$ that follows the assigned path $p_i$.

Although agents move in continuous space, potential inter-agent collisions can only occur at a finite set of shared waypoints defined by the MAPF plan.
We therefore use the underlying MAPF graph $\mathcal{G}_U$ to reason about collision avoidance during execution.
This graph abstraction allows collision avoidance to be enforced through temporal constraints at waypoints, rather than continuous-space reasoning.
We define the \emph{reach time} $t_i^r(q_i^k)$ as the time at which $a_i$ arrives at waypoint $q_i^k$, and
the corresponding \emph{leave time}
$t_i^l(q_i^k) = t_i^r(q_i^{k+1})$ as the time at which it departs toward the next
waypoint. We say $a_i$ occupies waypoint $q_i^k$ during $[t_i^r(q_i^k), t_i^l(q_i^{k})]$.

We define a trajectory set $\{x_i(t)\}_{i=1}^I$ as \emph{valid} for $\mathcal{P}$ if it satisfies:

\begin{enumerate}
    \item \textbf{Precedence Constraints:}
    For every pair of consecutive waypoints assigned to an agent, the execution must respect the
    ordering implied by the MAPF plan:
    \[
        t^r_i(q_i^k) < t^r_i(q_i^{k+1}), 
        \qquad \forall i, \forall k,
    \]
        If agents $a_i$ and $a_j$ both visit $q$, and $a_i$ is assigned to visit $q$ before $a_j$, then
    \[
        t^l_i(q) \le t^r_j(q).
    \]

    \item \textbf{Kinodynamic Feasibility:}
    \[
    x_i(\cdot) \in \mathcal{X}_i, \qquad \forall i,\ \forall t.
    \]

\end{enumerate}
Note that the precedence constraint is sufficient to prevent both vertex and edge collisions.
Specifically, the constraint $t^l_i(q) \le t^r_j(q)$ ensures that the occupancy intervals of agents at any shared location do not overlap, thereby preventing vertex collisions. 
Furthermore, an edge collision would require two agents to traverse the same undirected edge in opposite directions over overlapping time intervals. 
Such a situation necessarily implies overlapping occupancy of at least one shared endpoint. 
Since the MAPF plan is collision-free, and since the execution preserves waypoint order while prohibiting overlapping occupancy at any vertex, opposite-direction traversal cannot occur.

\noindent\textbf{Objective.}
Given the MAPF plan $\mathcal{P}$ and the kinodynamic models $\{\mathcal{X}_i\}$, the
Multi-Agent Execution Problem is
\[
\begin{aligned}
\min_{\{x_i(\cdot)\}} \quad 
    & \sum_{i=1}^I t^r_i(q_i^{z_i}) \\
    \text{s.t. } & 
    \{x_i(\cdot)\} \text{ is valid for } \mathcal{P}.
\end{aligned}
\]

\subsection{\planner Overview}
The high-level idea of \planner is to encode the precedence constraints in a MAPF plan into a temporal plan graph (TPG)~$\mathcal{G}$ and associate each vertex with a \emph{reserved interval}. 
Beyond encoding precedence relations, \planner synthesizes execution timing information from the speed profiles of multiple agents together with kinodynamic reachability constraints into these reserved intervals. 
Each reserved interval therefore specifies when an agent can arrive at a location without conflicting with others or violating its kinodynamic constraints.
\planner generates speed profiles for individual agents independently, ensuring that each agent visits the vertices along its path within the corresponding reserved intervals. 

All reserved intervals are initialized to $[0,\infty)$, and \planner iteratively refines both the intervals and the associated speed profiles. 
When a speed profile is fixed for one agent, the induced occupancy timing constrains the feasible arrival windows of dependent vertices, and these constraints are propagated through $\mathcal{G}$. 
The refinement process continues until the reserved intervals of vertices corresponding to the same location do not overlap, ensuring collision-free execution.
We say that a set of speed profiles \emph{satisfies} the edges in $\mathcal{G}$ if the occupancy intervals induced by the speed profiles do not violate the precedence constraints encoded by the edges. 
Satisfaction of all edges in the TPG guarantees that the precedence relations defined in the MAPF plan are preserved.

\cref{alg:ktpg} summarizes the overall procedure of \planner.
Using the example in~\cref{fig:method_overview}, we illustrate a single iteration of this process.
Each iteration begins by selecting an agent to refine its speed profile, following the selection rule in~\cref{method:solve_order}; in this case, $a_2$ is selected.
We define a Type-2 edge as \emph{satisfied} if the reserved intervals at its associated vertices do not overlap and the visiting precedence is preserved. Otherwise, it is \emph{conflicting}. All Type-2 edges in our example are conflicting in this iteration.
We then generate a speed profile for $a_2$ from its first to last \emph{unlocked} vertex, i.e., those vertices without any incoming conflicting edges, using the method in~\cref{sec:SSP}. These vertices are shown as dark green circles in \Cref{fig:method_overview} (b).
Next, we identify the conflicting edges originating from these unlocked vertices (i.e., the red edge in \Cref{fig:method_overview} (b)).
We use the generated speed profile to derive a split time for the target vertex (here, C2) and update the reserved intervals for both agents accordingly so that the intervals no longer overlap (see~\cref{method:conflict_edge}).
This update allows us to satisfy a conflicting edge and unlock additional vertices of other agents, as shown in \Cref{fig:method_overview} (c).
In the next iteration, we use the newly unlocked vertices to satisfy more Type-2 edges by selecting another agent and updating its speed profile.
This process repeats until all Type-2 edges are satisfied, at which point we have complete, collision-free speed profiles for all agents.

\begin{figure*}
    \centering
    \includegraphics[width=\linewidth]{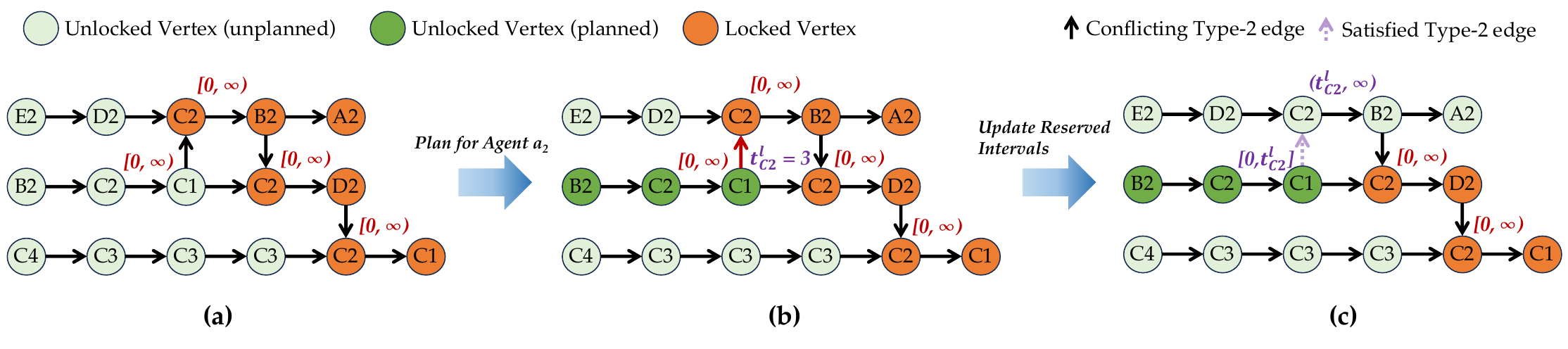}
    \caption{Updating reserved intervals to satisfy the conflicting Type-2 edge from C1 to C2.}
    \label{fig:method_overview}
\end{figure*}

\begin{figure}
    \centering
    \includegraphics[width=\linewidth]{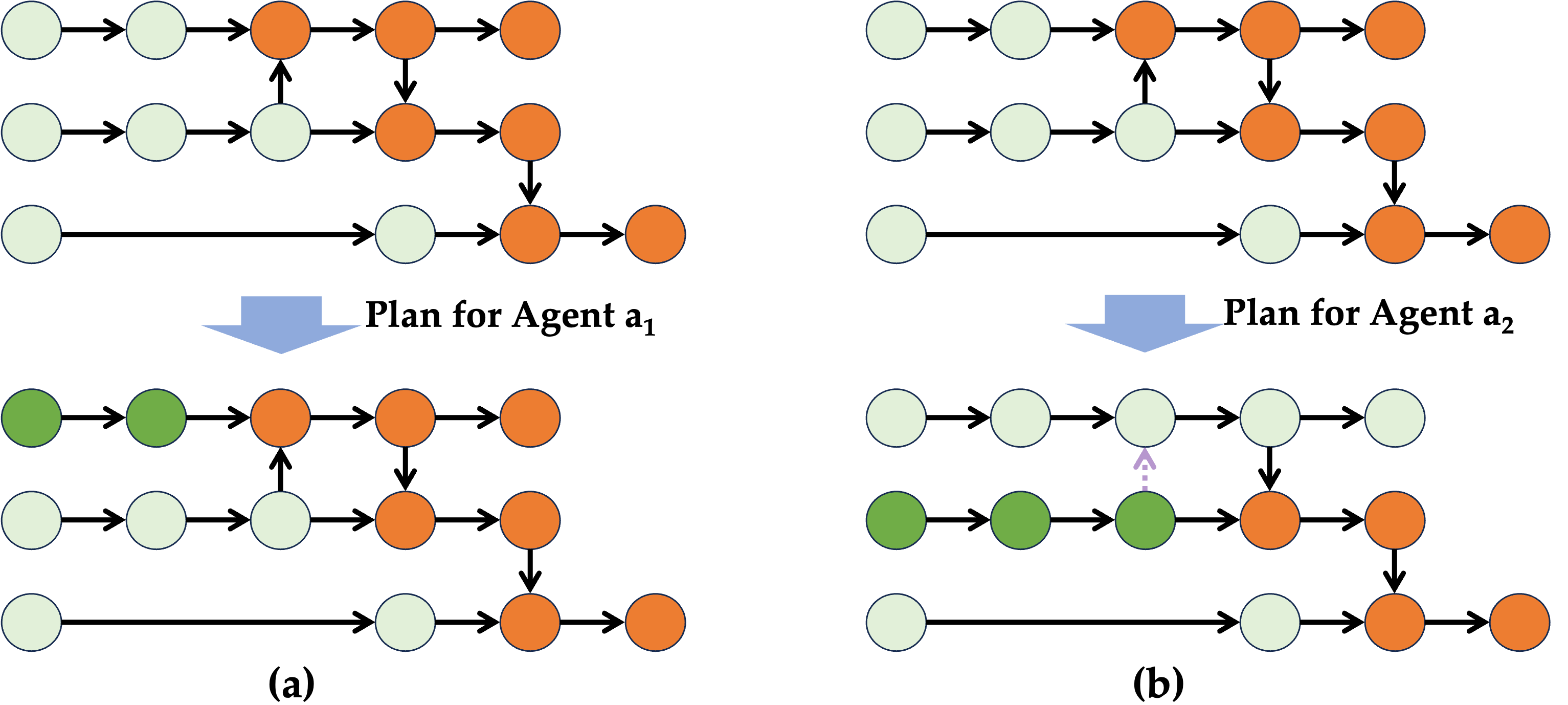}
    \caption{\planner with agent selection: (a) $a_1$ first, (b) $a_2$ first.}
    \label{fig:method_solving_order}
\end{figure}

\subsection{Selecting Agent} \label{method:solve_order}
The order in which agents are selected for replanning is important.
As shown in~\cref{fig:method_solving_order}, if we plan for $a_1$ first, no additional vertices are unlocked in that iteration. In contrast, planning for $a_2$ first results in three vertices being unlocked.
Thus, the selection order directly affects the number of vertices unlocked per iteration, which in turn influences the total number of iterations required to unlock all vertices.
Motivated by this observation, \planner adopts the following strategy:
At each iteration, we select the agent that can satisfy the greatest number of conflicting Type-2 edges originating from its currently unlocked vertices.


\begin{algorithm}[t]
\caption{\planner}
\label{alg:ktpg}
\KwIn{MAPF plan $P = \{p_i\}_{i=1}^I$, kinodynamic models $\{\mathcal{X}_i\}$}
\KwOut{Speed profiles $\{x_i(\cdot)\}_{i=1}^I$}

Construct TPG $\mathcal{G}$ from $P$ and initialize reserved intervals $\mathcal{R}$;\\
Initialize $\mathcal{C}\leftarrow\emptyset$ \tcp*{$\mathcal{C}$: agents with complete speed profiles to their goals}
Mark all Type-2 edges in $\mathcal{G}$ as unsatisfied;

\While{$|\mathcal{C}| < I$}{
    Select an agent $a_i$ to refine;\\
    Determine the set of unlocked vertices on $a_i$'s path;\\
    Compute a kinodynamically feasible speed profile $x_i(\cdot)$ on the unlocked
    vertices, subject to $\mathcal{X}_i$ and $\mathcal{R}$;\\
    Update the reserved intervals $\mathcal{R}$;\\
    Mark any Type-2 edges that have become satisfied as \emph{satisfied};\\
    \If{$x_i(\cdot)$ is computed up to the goal}{
        $\mathcal{C}\leftarrow \mathcal{C}\cup\{i\}$;
    }
}

\Return{$\{x_i(\cdot)\}_{i=1}^I$}
\end{algorithm}

\subsection{Updating Speed Profile}\label{sec:SSP}
After selecting an agent, we compute a speed profile for its 1-D path that (i) visits each unlocked vertex within its reserved interval, (ii) starts and ends with zero speed, and (iii) satisfies kinodynamic constraints.
Because the unlocked vertices have no incoming conflicting edges, any speed profile that respects the reserved intervals is collision-free with respect to already-fixed agents, and can be used to update the reserved intervals.

We formulate this step as a single-agent planning problem in \emph{time--distance} space along the fixed geometric path: each vertex induces a safe time window in which the agent may occupy the corresponding shared location, and each edge traversal must respect the agent's kinodynamic limits.
This problem can be addressed using MILP-based trajectory optimization methods~\cite{lipp2014minimum,liu2017speed} or search-based motion primitives methods~\cite{honig2022db,ali2023safe}.
Empirically, we achieve this by using a 1-D adaptation of SIPP-IP~\cite{ali2023safe}, an optimal algorithm that performs an A*-type search along the vertices using a set of predefined motion primitives.
Given the predefined motion primitives, SIPP-IP is complete and optimal, i.e., it provably finds the speed profile that minimizes the reach time at the last unlocked vertex if one exists and returns failure otherwise.
Importantly, the proposed framework is \emph{not} tied to a particular speed-profile generation method. It requires only a single-agent planner capable of producing a speed profile that respects the reserved time intervals and satisfies the kinodynamic constraints encoded in $\mathcal{X}_i$. Alternative kinodynamic models, such as jerk-limited $S$-curve profiles, can therefore be incorporated by modifying the underlying primitive set or replacing the trajectory optimizer, without altering the temporal reasoning mechanism or the reserved-interval update rules of \planner.

\subsection{Satisfying Conflicting Edges}\label{method:conflict_edge}
After obtaining the speed profile for $a_i$, we update the reserved intervals to satisfy conflicting edges.
Specifically, for each conflicting edge $(v_i^{k+1}, v_j^s)$ originating from an unlocked vertex $v_i^{k+1}$ of $a_i$, 
we use the leave time of $a_i$ at the shared location $loc(v_i^k) = loc(v_j^s)$, denoted as $t^l$, as the split time to divide the time interval at the shared location into $[0, t^l]$ and $(t^l, \infty)$. We enforce that $a_i$ and $a_j$ occupy the shared location within these intervals respectively.
Concretely, we update the reserved intervals for $v_i^k$ and $v_j^s$ by intersecting them with the corresponding segments.

\cref{fig:method_overview} shows an example.
Once the speed profile for agent $a_2$ is obtained, we find all conflicting edges originating from unlocked vertices of $a_2$, which is the first Type-2 edge from $a_2$ to $a_1$, with shared location $C_2$. From the speed profile of $a_2$, we use the leave time $t_2^l(C_2)$ as the split time. 
We divide the time interval at $C_2$ into $[0, t_2^l(C_2)]$ and $(t_2^l(C_2), \infty)$.
The reserved intervals of $a_1$ and $a_2$ are updated by intersecting their current reserved intervals with these segments, respectively.
By assigning these intervals, the Type-2 edge is marked as satisfied.
Any speed profile generated for $a_2$ in subsequent iterations must adhere to this updated interval.

We use the leave time as the split time to ensure that a feasible speed profile always exists when replanning for $a_i$ in future iterations.
Concretely, since, in the current iteration, the reserved intervals at the last unlocked vertex and all following locked vertices have an infinite upper bound, the next time when we replan $a_i$, $a_i$ can follow the speed profile generated in the current iteration, wait at the last unlocked vertex as needed, and continue without violating constraints.

\section{kTPG with Uncertainty}\label{sec:ktpgu}

In the previous section, execution was assumed deterministic. In realistic deployments, agents experience (execution timing) uncertainty due to controller noise, actuation latency, slippage, and other execution-level perturbations. To account for such temporal uncertainty, we extend the Multi-Agent Execution Problem (MAEP) to include uncertainty models and derive model-conditional safety margins that ensure deterministic or probabilistic precedence satisfaction under those models. These safety margins are integrated into \planner to form \uncertainPlanner, which maintains collision-free execution under the adopted uncertainty regimes.

\subsection{MAEP under Uncertainty}\label{sec:maep_under_uncertainty}
We extend the MAEP in~\cref{sec:maep} by incorporating an execution timing uncertainty model for movement.
Concretely, we model such uncertainty as additive deviations to nominal traversal times. Specifically, if $T_i^k$ is the nominal time for agent $a_i$ to move from $q_i^k$ to $q_i^{k+1}$, then the realized time is
\begin{equation}
    \hat{T}_i^{k} = T_i^{k} + \delta_i^k, \label{eq:noise}
\end{equation}
where $\delta_i^k$ is a temporal perturbation.

In this work, we adopt three representative execution timing uncertainty models from the MAPF, stochastic control, and temporal reasoning literature:

\begin{enumerate}
    \item \textbf{Bounded Cumulative Delay Model.}  
    This model represents bursty but bounded execution delays, such as those caused by transient control saturation or local disturbances. Bounded cumulative delay models have been used in MAPF under uncertainty because they impose a finite delay budget per agent without requiring full stochastic assumptions~\cite{atzmon2020robust}. Formally,
    \[
        \delta_i^k =
        \begin{cases}
            \eta & \text{with prob. } p,\ \text{if } \sum_{j=0}^{k-1}\delta_i^j < \mathcal{K}_i,\\
            0 & \text{otherwise},
        \end{cases}
    \]
    where $p$ is the probability of delay per-move, $\eta > 0$ is the magnitude of the unit delay, and $\mathcal{K}_i$ is the total accumulated delay limit for the agent. Intuitively, each time an agent moves, it may experience a fixed delay $\eta$ with probability $p$. However, the total accumulated delay across all moves is capped at $\mathcal{K}_i$. We assume $\mathcal{K}_i$ is a multiple of $\eta$, i.e., $\mathcal{K}_i = \ell_i\cdot\eta$ for some $\ell_i \in \mathbb{N}$.

    \item \textbf{Bounded Uniform Noise Model.}  
    In temporal reasoning under bounded intervals (e.g., Simple Temporal Networks), traversal times are assumed to lie within a known interval~\cite{dechter1991temporal}. We model this as
    \[
        \delta_i^k \sim \mathcal{U}(-\epsilon_i,\epsilon_i),
    \]
    where $\mathcal{U}(-\epsilon_i,\epsilon_i)$ denotes the uniform distribution over $[-\epsilon_i, \epsilon_i]$, and $\epsilon_i>0$ defines the maximum deviation from nominal traversal time.

    \item \textbf{Gaussian Noise Model.}  
    We model aggregate timing variability arising from many independent small perturbations as additive zero-mean Gaussian noise. Gaussian noise models are widely used in stochastic control and robotics to represent unbounded execution uncertainty~\cite{van2011lqg}:
    \[
        \delta_i^k \sim \mathcal{N}(0,\sigma^2),
    \]
    where $\mathcal{N}(0,\sigma^2)$ denotes the normal distribution with zero mean and variance $\sigma^2$, and the variance scales with the traversal distance.
\end{enumerate}

Unless otherwise stated, perturbations $\delta_i^k$ are assumed independent across agents and time. Correlated and heavy-tailed noise are outside the scope of the current derivation.

\subsection{Probabilistic Safety Margin Formulation}

Let $\mu_i^k = \sum_{j=0}^{k-1} T_i^j$ denote the nominal reach time of $a_i$ at vertex $v_i^k$, and let $\Delta_i^k = \sum_{j=0}^{k-1} \delta_i^j$ be the cumulative perturbation. Then the actual reach time is
\begin{equation}
    t_i^r(v_i^k) = \mu_i^k + \Delta_i^k.
\end{equation}

For a Type-2 edge $(v_i^{m+1}, v_j^n)$ between agents $a_i$ and $a_j$ with $loc(v_i^m) = loc(v_j^n)$, we require that the precedence constraint is satisfied with probability at least $P_d$, the \emph{safety probability threshold}:
\begin{equation}
    P\bigl(t_i^l(v_i^m) < t_j^r(v_j^n)\bigr) \;\geq\; P_d. \label{eq:safety_req}
\end{equation}

Under the adopted uncertainty model and independence assumptions, this probabilistic constraint can be converted into a safety margin $\gamma_{i,j}^{m,n}(P_d)$ such that
\begin{equation}
    \mu_j^n - \mu_i^{m+1} \;\geq\; \gamma_{i,j}^{m,n}(P_d). \label{eq:safety_margin}
\end{equation}

The safety margin depends on the distribution (or bounds) of the difference $\Delta_i^{m+1} - \Delta_j^n$ under the selected uncertainty model.
Since the first two models have bounded perturbations, we can derive safety margins that guarantee deterministic precedence satisfaction, i.e., $P_d = 1$. The Gaussian model has unbounded perturbations and thus only admits probabilistic guarantees for a given $P_d < 1$.

\begin{enumerate}
    \item \textbf{Bounded Cumulative Delay Model.}  
    Under the bounded cumulative delay model, each agent’s delay at each step is either $0$ or $\eta$, and the total delay budget is $\mathcal{K}_i$ for agent $a_i$ and $\mathcal{K}_j$ for agent $a_j$.
    Therefore, the cumulative perturbations satisfy
\begin{equation}
    \Delta_i^{m+1}-\Delta_j^n\;\in\;[-\min(n,\ell_j)\,\eta,\ \min(m{+}1,\ell_i)\,\eta].
\end{equation}
 Thus, a valid deterministic safety margin ($P_d=1$) is:
    \begin{equation}\label{eq:bounded_delay}
    \gamma_{i,j}^{m,n}(1)=\min(m{+}1,\ell_i)\,\eta,
    \end{equation}
    which accounts for the maximum possible prefix delay agent $a_i$ could incur under its finite budget.

    \item \textbf{Bounded Uniform Noise Model.}
    When $\delta_i^k\sim\mathcal{U}(-\epsilon_i,\epsilon_i)$ and $\delta_j^k\sim\mathcal{U}(-\epsilon_j,\epsilon_j)$, each term $\delta_i^k$ and $\delta_j^k$ lies within $[-\epsilon,\epsilon]$. Consequently,
    \begin{equation}
        \Delta_i^{m+1}-\Delta_j^n \;\in\;[-(m{+}1)\epsilon_i-n\epsilon_j,\,(m{+}1)\epsilon_i+n\epsilon_j].
    \end{equation}
    For deterministic precedence satisfaction ($P_d=1$), a valid safety margin is
    \begin{equation}\label{eq:stn_noise}
        \gamma_{i,j}^{m,n}(1) = (m{+}1)\epsilon_i+n\epsilon_j,
    \end{equation}
    which corresponds to the worst-case bound. 

    \item \textbf{Gaussian Noise Model.}  
    Under additive zero-mean Gaussian noise, if $\delta_i^k\sim\mathcal{N}(0,\sigma_i^2)$ and $\delta_j^k\sim\mathcal{N}(0,\sigma_j^2)$ independently, then $\Delta_i^{m+1}\sim\mathcal{N}(0,(m{+}1)\sigma_i^2)$ and $\Delta_j^n\sim\mathcal{N}(0,n\sigma_j^2)$. The difference $\Delta_i^{m+1}-\Delta_j^n$ is therefore Gaussian with variance
    \begin{equation}
        \mathrm{Var}[\Delta_i^{m+1}-\Delta_j^n] = (m{+}1)\sigma_i^2 + n\sigma_j^2.
    \end{equation}
    The $(1-P_d)$-quantile of this distribution gives
    \begin{equation}\label{eq:gaussian_noise}
        \gamma_{i,j}^{m,n}(P_d) = \Phi^{-1}(P_d)\cdot\sqrt{(m{+}1)\sigma_i^2+n\sigma_j^2},
    \end{equation}
    where $\Phi^{-1}$ is the inverse cumulative distribution function of the standard normal.
\end{enumerate}

\subsection{Integration with \planner}

To integrate uncertainty into the planning framework, \uncertainPlanner enforces a minimum time gap between agents’ expected leave and reach times at shared locations based on the computed safety margins. Specifically, once $t_i^l(v_i^m)=t_i^r(v_i^{m+1})$ is fixed for agent $a_i$, the earliest permissible reach time for agent $a_j$ at $v_j^n$ is
\begin{equation}
\label{eq:integrate_ktpg}
    t_j^r(v_j^n)\;\geq\;t_i^r(v_i^{m+1})+\gamma_{i,j}^{m,n}(P_d).
\end{equation}

Reserved intervals are then updated by intersecting these constrained windows with the existing feasibility intervals for both agents, thereby maintaining the precedence under temporal uncertainty. This mechanism preserves collision-free execution under the adopted uncertainty models.\\

\noindent\textbf{Remark.}
The choice of execution timing uncertainty model depends on the robot platform, controller, and operating environment. Our framework is mainly intended to handle execution delays arising from routine variability, such as tracking errors or local disturbances, which are typically bounded or light-tailed. More severe heavy-tailed disturbances, e.g., robot breakdowns or major system faults, are not explicitly modeled here and should instead be handled by higher-level path replanning or task reassignment.

\begin{figure}
    \centering
    \includegraphics[width=1.\linewidth]{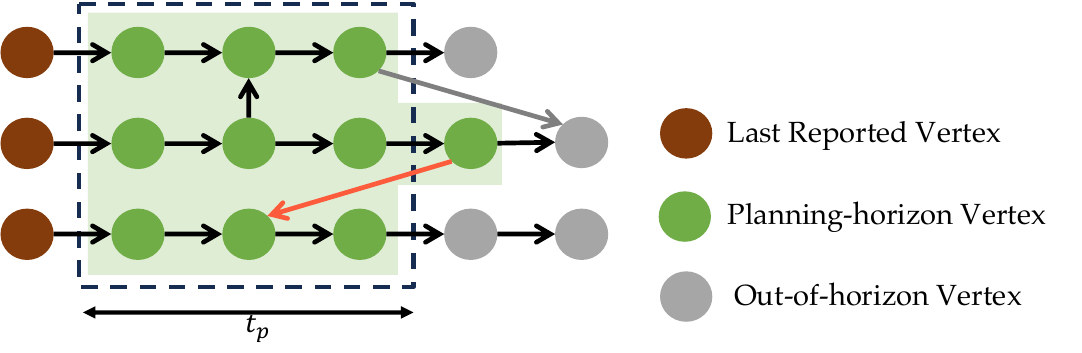}
    \caption{Illustration of the windowed robust execution.}
    \label{fig:windowed_execution}
\end{figure}

\section{Windowed kTPG (\mymethod)} \label{sec:winktpg}
While adding safety margins ensures the speed profiles are collision-free with a certain probability, as shown in~\cref{eq:bounded_delay,eq:stn_noise,eq:gaussian_noise}, the uncertainty in the reach times of an agent increases along its path.
So, according to~\cref{eq:integrate_ktpg}, the required safety margin for Type-2 edges between later vertices may become large, compromising solution quality.
To address this, we propose using online execution information to reduce estimated execution timing uncertainty.

\subsection{Real-Time MAEP under Uncertainty}
We extend the MAEP with uncertainty in \cref{sec:maep_under_uncertainty} by including information updates during execution, as proposed in~\cite{honig2019warehouse}. We assume that agents report their actual reach times and current kinodynamic states upon arriving at each location.

\subsection{\mymethod}
To maintain robustness while improving solution quality, we propose \mymethod.
\mymethod replaces the estimated reach times with the actual reach times reported by agents.
Using these actual times, it recalculates the estimated reach times for subsequent vertices, reducing their uncertainty.
\mymethod employs a windowed replanning method~\cite{berndt2023receding} to incorporate this updated information.
Concretely, we trigger replanning every $t_e$ seconds, during which we invoke \uncertainPlanner to generate speed profiles within a defined planning window.
A planning window, shown as the green region in~\cref{fig:windowed_execution}, is a subset of the original TPG and spans at least $t_p$ vertices for each agent.

When replanning is triggered, we follow the method from~\cite{berndt2023receding} to determine which vertices to include in the planning window. For each agent, we identify its most recently reported vertex and the corresponding reach time. The next $N_E$ vertices, with $N_E$ as a predefined hyper-parameter, are then marked as \emph{enqueued}, meaning the agent can continue to follow its previous speed profile through these vertices without replanning.
The planning window begins immediately after the last enqueued vertex. As shown in~\cref{fig:windowed_execution}, we first include the next $t_p$ vertices for each agent in the window.
To ensure all precedence constraints are satisfied, we further check Type-2 edges within the window.
If any such edge originates from outside the current window (e.g., the orange edge in~\cref{fig:windowed_execution}), we recursively expand the window to include the source vertex and any earlier vertices needed to preserve execution order.

Once the planning window is finalized, we use \uncertainPlanner to compute speed profiles for all agents within it.
Using the last reported vertex $v_{i}^{p}$ and its corresponding reach time $t_{i}^r(v_i^p)$ for each agent, we calculate the updated reach time for subsequent vertices as:
$t_i^r(v_i^k) = t_i^r(v_i^p) + \sum_{j=p}^{k-1}{\hat{T}_{j}}$.
This reduces the temporal uncertainty in the estimated reach time.
Once the speed profiles are generated, they are sent to the agents for execution.
This replanning and execution process repeats until all agents reach their goals.

\section{Theoretical Analysis} \label{sec:theory}
\begin{theorem} [Completeness of \planner and \uncertainPlanner]
\label{theorem:1}
Given a collision-free and cycle-conflict-free MAPF plan, 
\planner and \uncertainPlanner terminate in finite time and produce 
a kinodynamically feasible solution that preserves all precedence constraints.
\end{theorem}
\begin{proofsketch}
We first establish the existence of a feasible solution by leveraging proof from~\cite{honig2016multi,honig2019warehouse}.
As shown in these works, if the original MAPF plan is collision-free and cycle-conflict-free, then a kinodynamically feasible set of speed profile that preserves all precedence constraints exists.
We now prove termination of \planner.
As shown in~\cref{method:conflict_edge}, our method of determining the split time ensures that we can always find a valid speed profile in each iteration. Since $\mathcal{G}$ is guaranteed to be a directed acyclic graph, if there are locked vertices in $\mathcal{G}$, there is at least one conflicting edge originating from an unlocked vertex to a locked vertex.
So when we replan for the agent associated with this unlocked vertex, \planner guarantees progress by satisfying at least this conflicting edge. Therefore, \planner is guaranteed to unlock all vertices after a finite number of iterations. That is, kTPG is complete and guarantees to find feasible solutions.
For \uncertainPlanner, incorporating a safety margin does not affect the existence of a valid speed profile in each iteration, making the completeness of \planner unchanged.
\end{proofsketch}


\begin{theorem} [Completeness of \mymethod]
Given a collision-free and cycle-conflict-free MAPF plan, \mymethod ensures that, under the adopted execution timing uncertainty model, all agents reach their goal locations in finite time, while precedence satisfaction and collision-free execution are preserved according to the guarantees provided by that model.
\end{theorem}
\begin{proofsketch}
In each planning window, feasibility is ensured by allowing each agent to retain and follow its previously computed speed profile up to the last vertex from the previous window.
Based on the reasoning in~\cref{method:conflict_edge}, we ensure that, for each agent, a valid speed profile exists in the current window, enabling it to wait at the final vertex of the previous window before proceeding.
As shown in~\cref{theorem:1}, at least one agent makes progress in each set of speed profiles. Thus, \mymethod completes in finite time with a bounded number of \uncertainPlanner calls.
\end{proofsketch}

\begin{theorem} [Time Complexity of \planner and \uncertainPlanner]
    \planner and \uncertainPlanner require at most $\min{(\left|\mathcal{V}\right|, \left|\mathcal{E}_2\right|)}$ iterations of speed profile updates.
\end{theorem}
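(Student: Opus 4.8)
The plan is to prove the two bounds separately and take the minimum, glued together by one monotonicity observation. Call one \emph{iteration} a single speed-profile update; by construction such an update is triggered only when the selected agent has a conflicting Type-2 edge leaving one of its currently unlocked vertices. First I would record the monotonicity fact: in \cref{method:conflict_edge} every reserved interval is only ever replaced by a sub-interval of itself, so (i) once the reserved intervals of two vertices at a shared location become disjoint and correctly ordered --- i.e.\ the Type-2 edge between them is \emph{satisfied} --- they stay that way forever, hence a satisfied Type-2 edge never reverts to conflicting; and (ii) since a vertex is \emph{unlocked} exactly when all its incoming Type-2 edges are satisfied, an unlocked vertex stays unlocked. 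For \uncertainPlanner the only change is the inserted safety-margin gap, which still produces sub-intervals, so this fact is unchanged.

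For the $|\mathcal{E}_2|$ bound I would reuse the progress argument from \cref{theorem:1}: as long as a locked vertex remains, acyclicity of $\mathcal{G}$ forces a conflicting edge from an unlocked to a locked vertex, so the selection rule of \cref{method:solve_order} picks an agent that satisfies at least one currently conflicting Type-2 edge, and by monotonicity (i) that edge permanently leaves the conflicting set. Thus the number of conflicting Type-2 edges strictly decreases each iteration; since it starts at most $|\mathcal{E}_2|$ and the algorithm halts when it reaches $0$, there are at most $|\mathcal{E}_2|$ iterations.

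For the $|\mathcal{V}|$ bound I would charge iterations to vertices, agent by agent. When $a_i$ is replanned, \cref{method:conflict_edge} satisfies \emph{all} conflicting Type-2 edges leaving its currently unlocked vertices in that single iteration; since conflicting edges are only ever removed and never created, each of those vertices thereafter has no conflicting outgoing edge. Hence the nonempty vertex sets that trigger $a_i$'s successive replans are pairwise disjoint, so $a_i$ is replanned at most $z^i + 1$ times (the number of vertices on its path), and summing over agents gives at most $\sum_i (z^i + 1) = |\mathcal{V}|$ iterations. (As a consistency check, $v_i^0$ carries no incoming Type-2 edge, since nothing can precede $a_i$ at its own start location, so it is unlocked from the outset.) Together the two bounds give $\min(|\mathcal{V}|, |\mathcal{E}_2|)$, and the reasoning for \uncertainPlanner is identical by the monotonicity remark.

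The hard part will be the $|\mathcal{V}|$ bound, specifically the disjointness claim: it relies on (a) \cref{method:conflict_edge} clearing every qualifying outgoing edge of the replanned agent at once rather than one at a time, and (b) monotonicity forbidding a once-cleared vertex from acquiring a fresh conflicting outgoing edge. The $|\mathcal{E}_2|$ half is essentially a corollary of \cref{theorem:1} plus monotonicity.
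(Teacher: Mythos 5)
Your proof is correct, and it only half-coincides with the paper's. The $|\mathcal{E}_2|$ bound is exactly the paper's argument: the progress step from \cref{theorem:1} satisfies at least one conflicting Type-2 edge per iteration, and (as you make explicit, and the paper leaves implicit) reserved intervals only shrink, so a satisfied edge never reverts to conflicting. For the $|\mathcal{V}|$ bound the paper instead asserts directly that each iteration ``unlocks at least one previously locked vertex,'' whereas you charge each iteration to the replanned agent's unlocked vertices whose outgoing conflicting edges get cleared in \cref{method:conflict_edge}, and argue these trigger sets are nonempty and pairwise disjoint by monotonicity. Your route is the more careful one: the paper's per-iteration unlocking claim does not obviously hold when the target of a newly satisfied edge still has other conflicting incoming edges from other agents (it then stays locked), while your charging argument sidesteps this entirely and still yields at most $\sum_i (z^i+1)=|\mathcal{V}|$ iterations. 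The cost is that your version leans on the assumption that a single iteration clears \emph{all} conflicting edges leaving the selected agent's unlocked vertices, which matches the paper's description of \cref{method:conflict_edge} but deserves the explicit statement you give it; the paper's one-line version buys brevity at the price of that unexamined unlocking claim.
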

\begin{proof}
As proved in~\Cref{theorem:1}, each iteration satisfies at least one new Type-2 edge and unlocks at least one previously locked vertex.
This implies that the total number of iterations is upper bounded by $\min(|\mathcal{V}|, |\mathcal{E}_2|)$.
\end{proof}
Since \mymethod calls \uncertainPlanner within each of the $W$ planning windows, the total number of iterations is at most $W \cdot \min(|\mathcal{V}|, |\mathcal{E}_2|)$.


\section{Empirical Evaluation} \label{sec:exps}

\begin{figure*} [ht!]
    \centering
    \includegraphics[width=1.\linewidth]{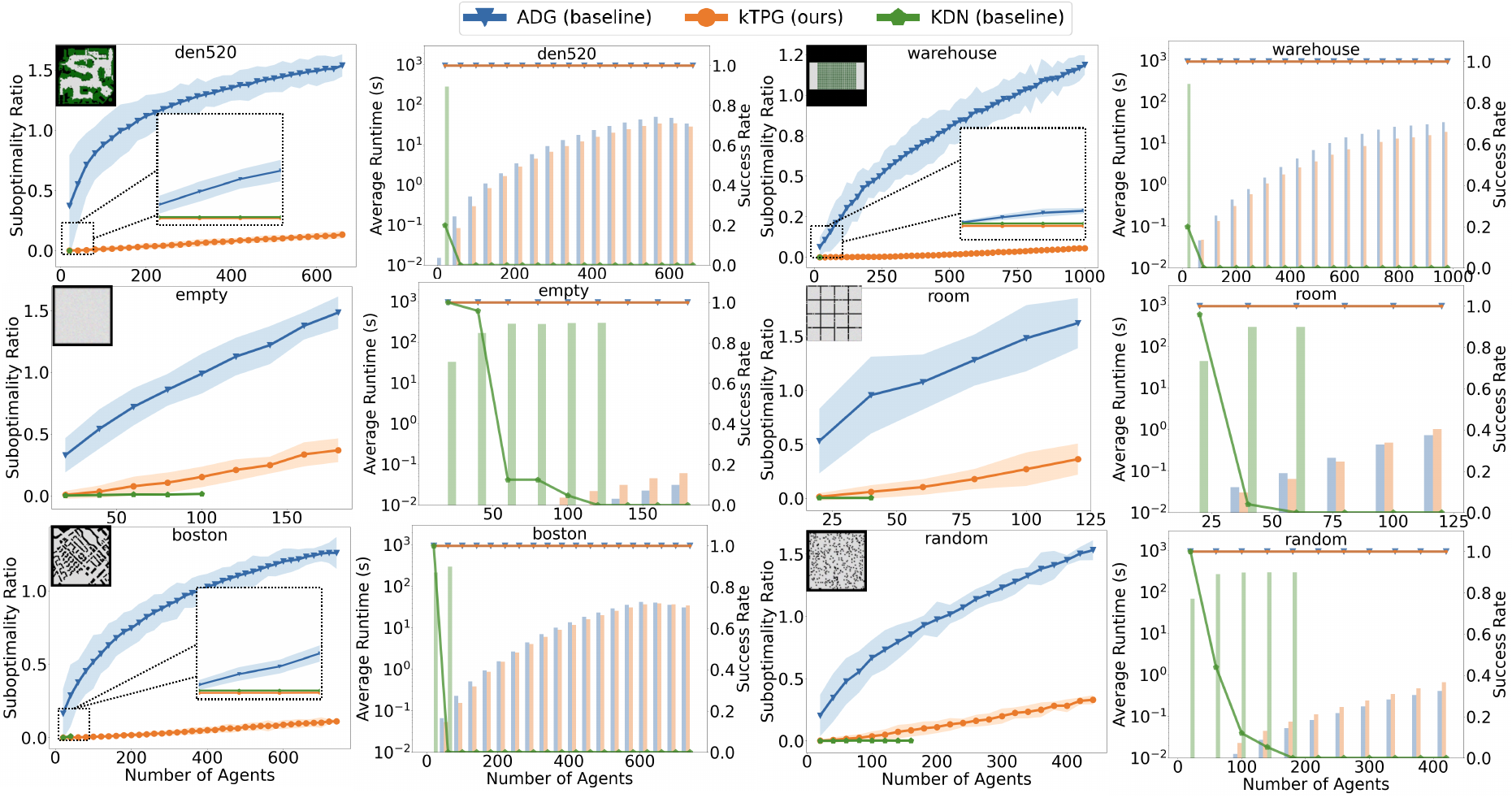}
    \caption{Average suboptimality (odd-numbered columns), average runtime (bar plots in even-numbered columns), and success rate (line plots in even-numbered columns).}
    \label{fig:exp1_sol_sr}
\end{figure*}

In this section, we empirically evaluate the proposed methods. First, we compare \planner with prior MAPF execution methods in terms of solution quality and scalability under kinodynamic constraints. Second, we study \mymethod under different execution timing uncertainty models. Third, we validate \mymethod in more realistic settings using both physics-based simulation and physical-robot experiments.

We evaluate all methods on six maps with 25 instances from the MovingAI MAPF benchmark~\cite{Stern2019benchmark}: 
\texttt{empty} (\textit{empty-32-32}), \texttt{random} (\textit{random-64-64-10}), \texttt{room} (\textit{room-64-64-16}), \texttt{den520d} (\textit{den520d}), and \texttt{Boston} (\textit{Boston\_0\_256}), \texttt{warehouse} (\textit{warehouse-20-40-10-2-2}).
The distance between two neighboring cells is $1\; m$. 
The MAPF plans used for execution are generated by PBS~\cite{ma2019searching}.
We use \emph{suboptimality ratio} to evaluate the solution quality, which is defined as $\frac{T_{sum}-T_{ideal}}{T_{ideal}}$, where $T_{sum}$ is the sum of reach times, and $T_{ideal}$ is the sum of individual agent reach times without considering collisions or delays.

Unless otherwise specified, all methods included in the study were implemented in C++ and tested on an Ubuntu 20.04 system with an AMD 3990x processor and 188 GB of memory. The code was run using a single core for all computations. The source code for our method is publicly available at \repolink.

\subsection{Kinodynamic Planning}
In this experiment, we evaluate the solution quality of the speed profiles generated by \planner and compare it against two baselines: ADG and KDN.
ADG~\cite{honig2019warehouse} generates speed profiles by planning only for vertices without unsatisfied incoming Type-2 edges, using the same speed profile planner as \planner.
Furthermore, ADG is assumed to have access to real-time agent locations and replans its speed profile every $0.01\, s$.
KDN~\cite{zhang2021temporal} formulates the temporal dependency of TPG as a MILP problem and finds the optimal solution for MAEP~(see \cref{sec:maep}).

\paragraph{Experiment Setup}
Since KDN is designed for an omnidirectional robot model without uncertainty, we adopt the same setting in this experiment.
An \textit{omnidirectional robot} can move in any orientation, with its speed profile constrained by the dynamic limits of its physical model.
In this experiment, each agent is subject to a speed limit of $[0, 2]\, \text{m/s}$ and an acceleration limit of $[-1, 1] \, \text{m/s}^2$.
As required by the speed profile planner, speeds at the vertices are discretized into three levels: $\{0, \sqrt{2}, 2\}\, \text{m/s}$.
The runtime limit for all methods is 300 seconds.

\paragraph{Result Analysis}
As shown in~\cref{fig:exp1_sol_sr}, both \planner and KDN show significant improvements in solution quality across all maps compared to ADG. This improvement is more evident on larger maps with more agents (e.g., up to 51.7\% on the warehouse map). This is because ADG only plans speed profiles for vertices without incoming Type-2 edges. In situations with more Type-2 edges, this strategy is overly conservative and leads to frequent unnecessary deceleration.
While KDN achieves slightly better solution quality, it is much less scalable than \planner. Both \planner and ADG maintain a 100\% success rate on problems involving up to 1,000 agents.
In contrast, KDN, while ensuring optimality, cannot solve problems with more than 200 agents. This is due to KDN relying on a MILP formulation to encode the precedence constraints imposed by Type-2 edges, whose complexity grows exponentially with the number of Type-2 edges.
Notably, while the runtime for each planning round of ADG is low, its frequent replanning significantly increases the total runtime, making it similar to that of \planner.

\begin{table*}[t]
\caption{Additional reliability and efficiency metrics on the warehouse map with 100 agents under three execution timing uncertainty models ($t_e = 10$, $t_p=20$). Results are averaged over all scenarios.}
\centering
\begin{tabular}{lcccccccc}
\toprule
\multirow{2}{*}{Noise level} & \multicolumn{2}{c}{Violation rate (\%)} 
& \multicolumn{2}{c}{Avg. velocity (m/s)} 
& \multicolumn{2}{c}{Total idle time (s)} 
& \multicolumn{2}{c}{Total stop-and-go count} \\
\cmidrule(lr){2-3} \cmidrule(lr){4-5} \cmidrule(lr){6-7} \cmidrule(lr){8-9}
 & ADG & \mymethod & ADG & \mymethod & ADG & \mymethod & ADG & \mymethod \\
\midrule
$\sigma=0.001$ & 0.00 & 0.74 & 0.95 & 1.02 & 3460.00 & 3104.22 & 3920.88 & 3317.72 \\
$\sigma=0.03$  & 0.00 & 0.70 & 0.95 & 1.01 & 3481.46 & 3236.12 & 3920.52 & 3320.28 \\
$\sigma=0.05$  & 0.00 & 0.69 & 0.95 & 1.01 & 3497.71 & 3330.00 & 3919.40 & 3322.28 \\
\midrule
$\mathcal{K}=1.0$ & 0.00 & 0.00 & 0.95 & 1.02 & 3453.82 & 3164.25 & 3920.48 & 3324.00 \\
$\mathcal{K}=3.0$ & 0.00 & 0.00 & 0.94 & 1.01 & 3504.14 & 3227.75 & 3922.76 & 3327.24 \\
$\mathcal{K}=5.0$ & 0.00 & 0.00 & 0.93 & 0.99 & 3544.29 & 3305.45 & 3923.52 & 3328.58 \\
\midrule
$\epsilon=0.01$ & 0.00 & 0.00 & 0.95 & 1.02 & 3461.88 & 3147.40 & 3920.08 & 3320.24 \\
$\epsilon=0.02$ & 0.00 & 0.00 & 0.95 & 1.02 & 3468.71 & 3196.95 & 3922.12 & 3323.28 \\
$\epsilon=0.1$  & 0.00 & 0.00 & 0.95 & 0.99 & 3506.42 & 3616.23 & 3920.04 & 3345.20 \\
\bottomrule
\end{tabular}
\label{table:additional_metrics}
\end{table*}

\subsection{Robust Execution Under Uncertainty}
In this section, we evaluate the performance of \mymethod in environments with execution timing uncertainty. As KDN cannot handle uncertainty, only ADG is used as a baseline.
Our goal is to examine whether the proposed safety margin mechanism can provide the same guarantees predicted by the theory. We also study its execution efficiency under different factors, including window size and uncertainty level.

\begin{figure}[t!]
    \centering
    \includegraphics[width=1.\linewidth]{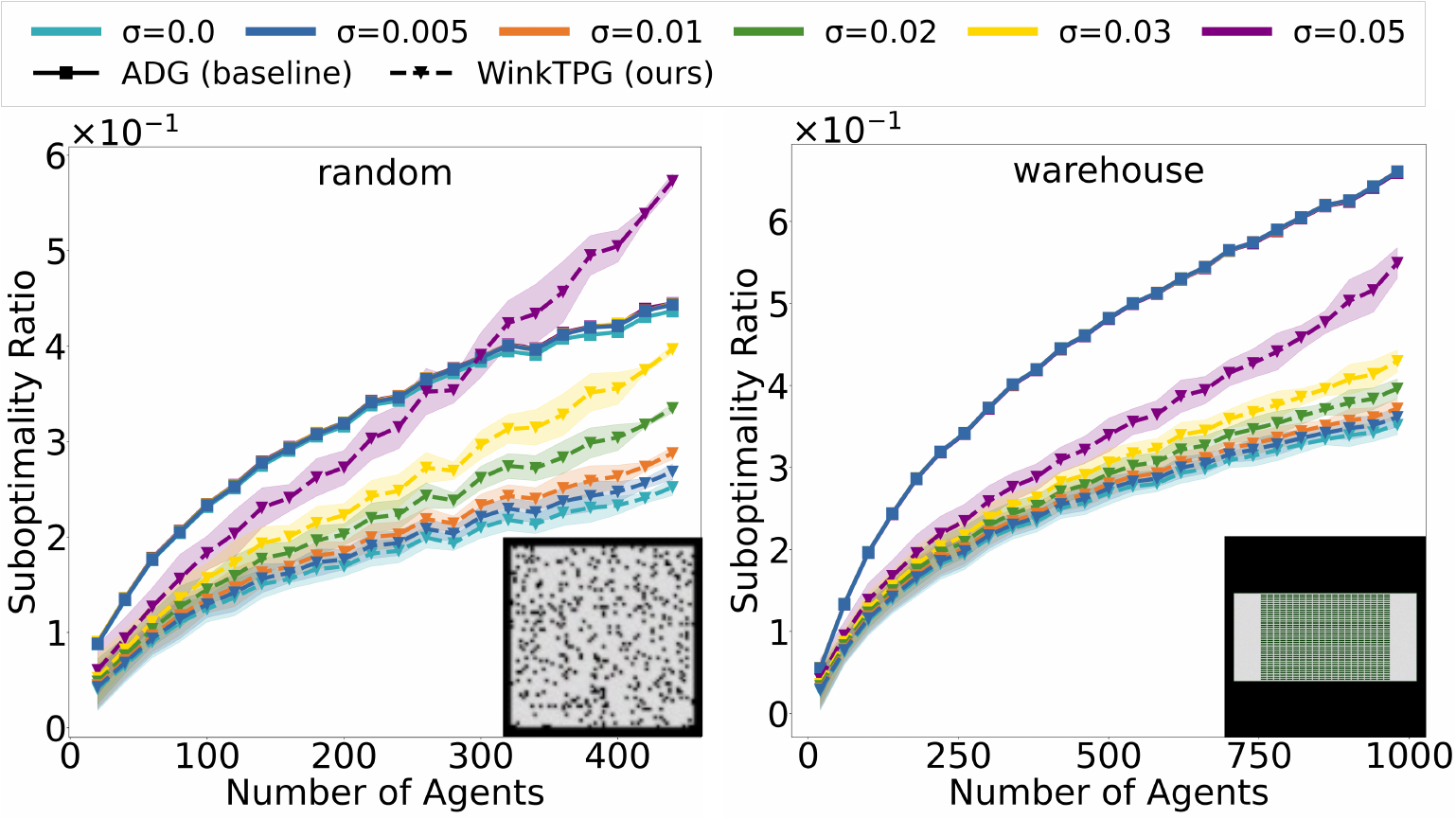}
    \caption{Comparison of solution quality between ADG (solid lines) and \mymethod (dashed lines) under the Gaussian noise model across different uncertainty levels (indicated by color). The ADG lines largely overlap across uncertainty levels in both figures. Shaded regions indicate the variance across all scenarios.}
    \label{fig:exp2_noise}
\end{figure}

\begin{figure}[t!]
    \centering
    \includegraphics[width=1.\linewidth]{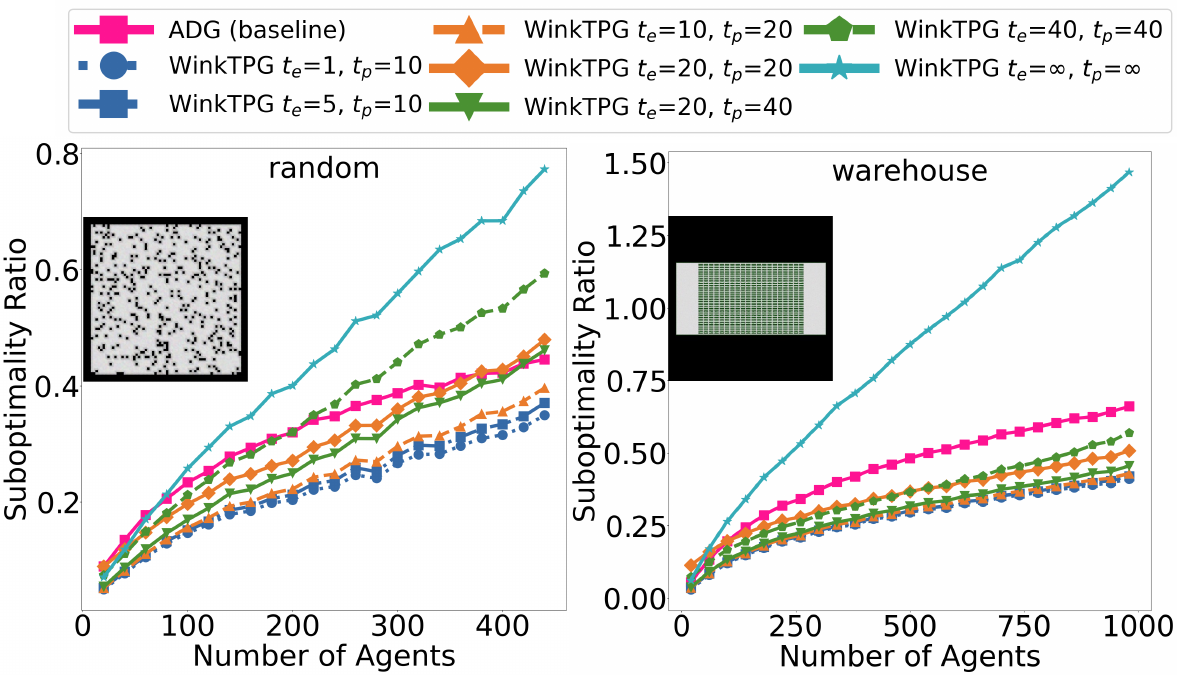}
    \caption{Comparison of solution quality between ADG and \mymethod under the Gaussian noise model, with different execution and planning windows.}
    \label{fig:exp2_horizon}
\end{figure}

\paragraph{Experiment Setup}
We use a differential-drive robot model that is more commonly used in real-world applications.
Compared to omnidirectional robots, it moves only along its current orientation, while being constrained by the same speed and acceleration limits as in the previous section.
It can change its orientation only when its velocity is zero.
Specifically, it takes $0.5 \, \text{s}$ to turn left or right and $0.9 \, \text{s}$ to turn backward.
To evaluate both reliability and execution efficiency, we report the following additional metrics in~\cref{table:additional_metrics}: (1) \emph{Precedence violation rate:} the ratio of violated precedence constraints at shared locations to the total number of such constraints.
(2) \emph{Average velocity:} the mean linear velocity over the entire execution horizon.
(3) \emph{Total idle time:} the cumulative duration during which the linear and angular velocity of an agent is zero.
(4) \emph{Stop-and-go count:} the number of times an agent accelerates from zero velocity to positive velocity.

\subsubsection{Unbounded Uncertainty Model}

We first evaluate \mymethod under a Gaussian noise model.  
We parameterize the noise magnitude using a scalar $\sigma$ shared across all agents
By~\cref{eq:noise}, this implies that the traversal time between adjacent vertices (distance $1\,\text{m}$) has variance $\sigma^2$.
For intuition, the nominal traversal time between adjacent vertices at maximum speed is $\frac{1 \, \text{m}}{2 \, \text{m/s}} = 0.5 \, \text{s}$. 
When $\sigma=0.05$, applying the $3\sigma$ rule~\cite{casella2024statistical} implies up to $\pm 30\% \times 0.5\,\text{s}$ deviation. 
Since this already represents substantial timing variability for short-range motion, we restrict $\sigma \le 0.05$.
We set the safety probability threshold to $P_d = 0.95$ unless otherwise specified.

\noindent
\emph{Effect of Uncertainty Magnitude.}
We first evaluate \mymethod under varying uncertainty levels using a fixed window size. Specifically, the execution window \(t_e\) is set to \(10 \, \text{s}\), and the planning window \(t_p\) is \(20\).
As shown in~\cref{fig:exp2_noise}, while \mymethod generates more suboptimal solutions with larger $\sigma$, it consistently outperforms ADG in environments with moderate temporal uncertainty (i.e., $\sigma < 0.03$).
This trend is further supported by the additional metrics in~\cref{table:additional_metrics}. Compared with ADG, \mymethod generally achieves higher average velocity, lower total idle time, and fewer stop-and-go events under the Gaussian noise model, indicating smoother and more efficient motion.
This supports our earlier claim that ADG is an overly conservative strategy.
The solution quality of ADG remains nearly constant across different uncertainty levels because it replans at a very high frequency and is thus less sensitive to temporal uncertainty.
As shown later, \mymethod could achieve better solution quality with a smaller execution window.

\noindent
\emph{Effect of Window Size.}
Next, we evaluate the relationship between solution quality and the length of the execution and planning windows. All experiments are conducted in environments with $\sigma = 0.03$.
As shown in~\cref{fig:exp2_horizon}, configurations where the execution window is smaller than the planning window consistently show better solution quality. Also, smaller execution windows always achieve better performance, due to more frequent replanning with updated location information.

\begin{figure}[t]
    \centering
    \includegraphics[width=1.0\linewidth]{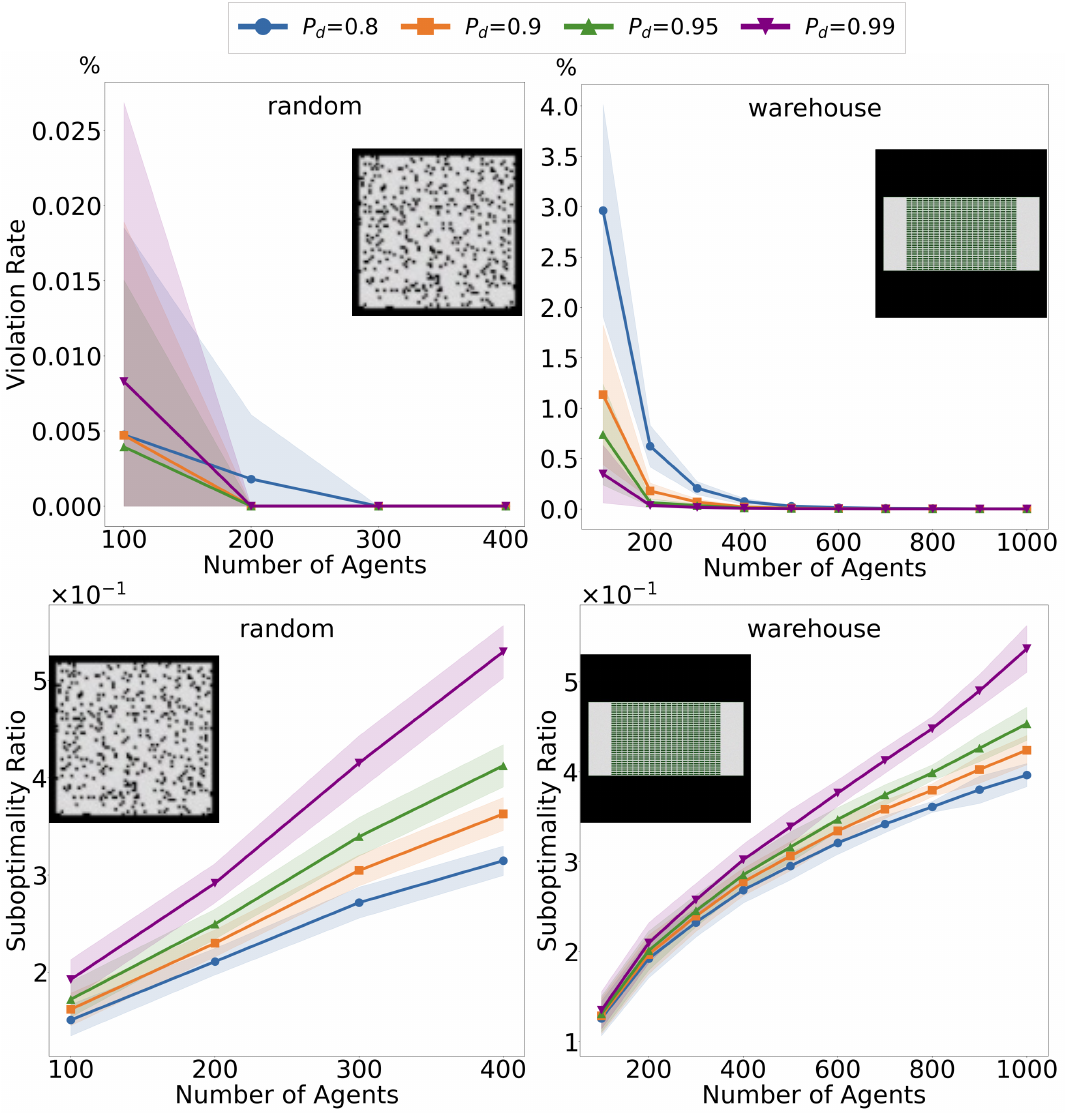}
    \caption{Empirical precedence violation rate and solution quality as a function of $P_d$ under the Gaussian noise model.}
    \label{fig:pd_vs_violation}
\end{figure}

\noindent
\emph{Effect of Safety Probability Threshold.}
We now examine how the theoretical safety probability threshold $P_d$ translates into empirical precedence violation rates under the Gaussian noise model.
For each map and uncertainty level, we vary $P_d \in \{0.8, 0.9, 0.95, 0.99\}$.
\Cref{fig:pd_vs_violation} shows that the empirical violation rate remains below the theoretical upper bound $1 - P_d$, confirming that the probabilistic safety margin behaves consistently with the Gaussian model assumptions, while the slight non-monotonicity on the random map is likely attributable to variance across trials.
Meanwhile, larger $P_d$ values produce more conservative safety margins and therefore increased execution time, showing a clear trade-off between robustness and efficiency.

\begin{table*}[t]
\caption{Planning Runtime on the warehouse map with $\sigma=0.03$.
MAPF refers to the time taken to generate the initial MAPF plan, while ADG and \mymethod represent the runtime per planning round.
For \mymethod, we also report the actual average planning-window size $\bar{t_p}$, which denotes the average number of vertices included in the planning window during replanning.}
\centering
\begin{tabular}{ccccc}
\toprule
{Number of Agents} & \multicolumn{1}{c}{{MAPF}} & \multicolumn{1}{c}{{ADG}} & \multicolumn{2}{c}{{\mymethod}} \\
\midrule
\multirow{3}{*}{100}
& \multirow{3}{*}{$89.08 \pm 32.76\, ms$}
& \multirow{3}{*}{\makecell{$0.24 \pm 0.03\, ms$}}
& {$t_p$=10} & $2.17 \pm 0.20\, ms$ ($\bar{t_p}=11.30 \pm 0.09$) \\
& & & {$t_p$=20} & $3.31 \pm 0.43\, ms$ ($\bar{t_p}=21.31 \pm 0.10$) \\
& & & {$t_p$=40} & $4.96 \pm 0.78\, ms$ ($\bar{t_p}=41.34 \pm 0.11$)\\
 & & & {$t_p$=$\infty$} & $266.06 \pm 52.33 \, ms$ ($\bar{t_p}=N/A$)\\
\midrule
\multirow{3}{*}{1,000}
& \multirow{3}{*}{$44513.62 \pm 3941.23\, ms$}
& \multirow{3}{*}{\makecell{$2.02 \pm 0.09\, ms$}}
& {$t_p$=10} & $343.85 \pm 14.32\, ms$ ($\bar{t_p}=12.62 \pm 0.08$)\\
& & & {$t_p$=20} & $389.23 \pm 14.21\, ms$ ($\bar{t_p}=22.68 \pm 0.08$)\\
& & & {$t_p$=40} & $519.42 \pm 20.13\, ms$ ($\bar{t_p}=42.82 \pm 0.09$)\\
 & & & {$t_p$=$\infty$}  & $28211.74 \pm 1673.22\, ms$ ($\bar{t_p}=N/A$)\\
\bottomrule
\end{tabular}
\label{table:runtime}
\end{table*}

\begin{table*}[t]
\caption{Communication overhead on the warehouse map ($t_{e}=10$, $t_{p}=20$). Uplink denotes the total state updates sent from agents to the server,
downlink denotes the total action commands sent from the server to agents,
and replan freq.\ denotes the number of replanning cycles triggered per second.}
\centering
\begin{tabular}{ccrrc}
\toprule
{Number of Agents} & \multicolumn{1}{c}{{Method}} & \multicolumn{1}{c}{{Uplink (KB/replan)}} & \multicolumn{1}{c}{{Downlink (KB/replan)}} & \multicolumn{1}{c}{{Replan Freq.\ (Hz)}} \\
\midrule
\multirow{2}{*}{100}
& ADG       & $1.333 \pm 0.111$  & $2.134 \pm 0.183$    & $1.0$\\
& \mymethod & $1.419 \pm 0.099$  & $19.953 \pm 1.384$   & $0.1$\\
\midrule
\multirow{2}{*}{1,000}
& ADG       & $13.474 \pm 0.361$ & $15.688 \pm 0.461$   & $1.0$\\
& \mymethod & $13.258 \pm 0.339$ & $178.971 \pm 4.901$  & $0.1$\\
\bottomrule
\end{tabular}
\label{tab:communication-overhead}
\end{table*}


\noindent
\emph{Planning Runtime and Communication Overhead.}
We report in~\cref{table:runtime} the runtime of ADG, \mymethod, and the MAPF planner. 
In both cases, ADG and \mymethod demonstrate significantly faster runtimes compared to the MAPF planner used to generate the paths.
Notably, while ADG appears faster, it requires significantly more frequent replanning.
Additionally, with an appropriately chosen planning window size, \mymethod generates plans in under a second, showing it is capable of near real-time performance.
We further quantify the communication overhead in~\cref{tab:communication-overhead}.
Although \mymethod transmits larger messages per replanning cycle due to coordinated speed profiles, it replans less frequently than ADG.
At 1{,}000 agents, the downlink per plan is approximately 179 \, KB, which remains well within the capacity of standard wireless networks.

\subsubsection{Bounded Uncertainty Models}

We next evaluate \mymethod under the two bounded uncertainty models introduced in Sec.~\ref{sec:ktpgu}.

Unlike the Gaussian noise model, both bounded models admit deterministic safety guarantees, since the cumulative perturbations are bounded by construction.

\begin{figure}[t!]
    \centering
    \includegraphics[width=1.\linewidth]{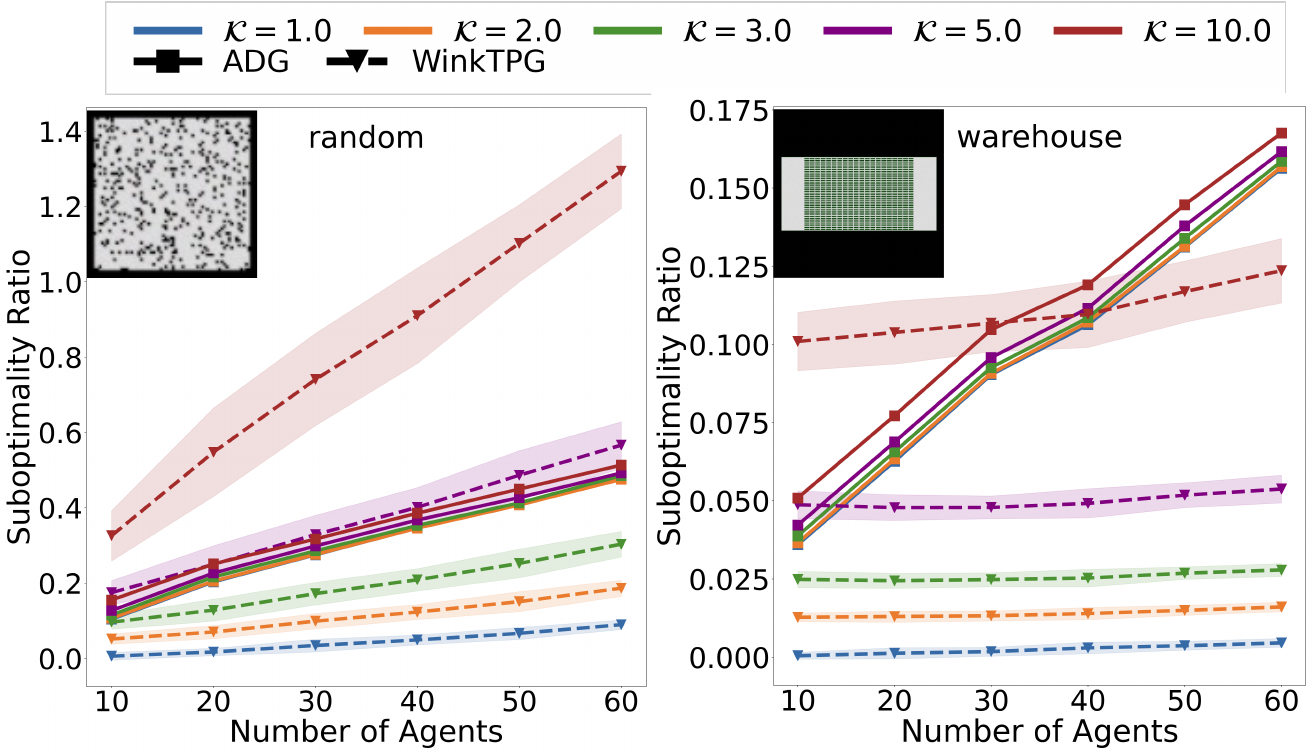}
    \caption{Comparison of solution quality between ADG (solid lines) and \mymethod (dashed lines) under the bounded cumulative delay model across different cumulative delay budget $\mathcal{K}$ (indicated by color).}
    \label{fig:kdelay_results}
\end{figure}

\noindent
\emph{Results under Bounded Cumulative Delay Model.}
For the bounded cumulative delay model, we vary the cumulative delay budget
$\mathcal{K} \in \{1.0, 2.0, 3.0, 5.0, 10.0\}\, s$ and fix the unit delay $\eta = 1.0\, s$. 
This parameter represents the maximum total execution delay (in seconds) an agent may accumulate along its path.
As shown in ~\cref{fig:kdelay_results,table:additional_metrics}, \mymethod achieves 100\% precedence satisfaction across all tested $\mathcal{K}$ values, confirming the deterministic guarantee derived in ~\cref{sec:ktpgu}.
In terms of solution quality, \mymethod consistently achieves lower suboptimality than ADG with smaller $\mathcal{K}$ values.
The performance gap becomes more pronounced for moderate delay budgets (e.g., $\mathcal{K}=1.0$ and $\mathcal{K}=2.0$), where ADG’s conservative, event-driven execution causes unnecessary deceleration.
As $\mathcal{K}$ increases, the solution cost increases gradually due to larger safety margins required to accommodate worst-case cumulative delays.

\begin{figure}[t!]
    \centering
    \includegraphics[width=1.\linewidth]{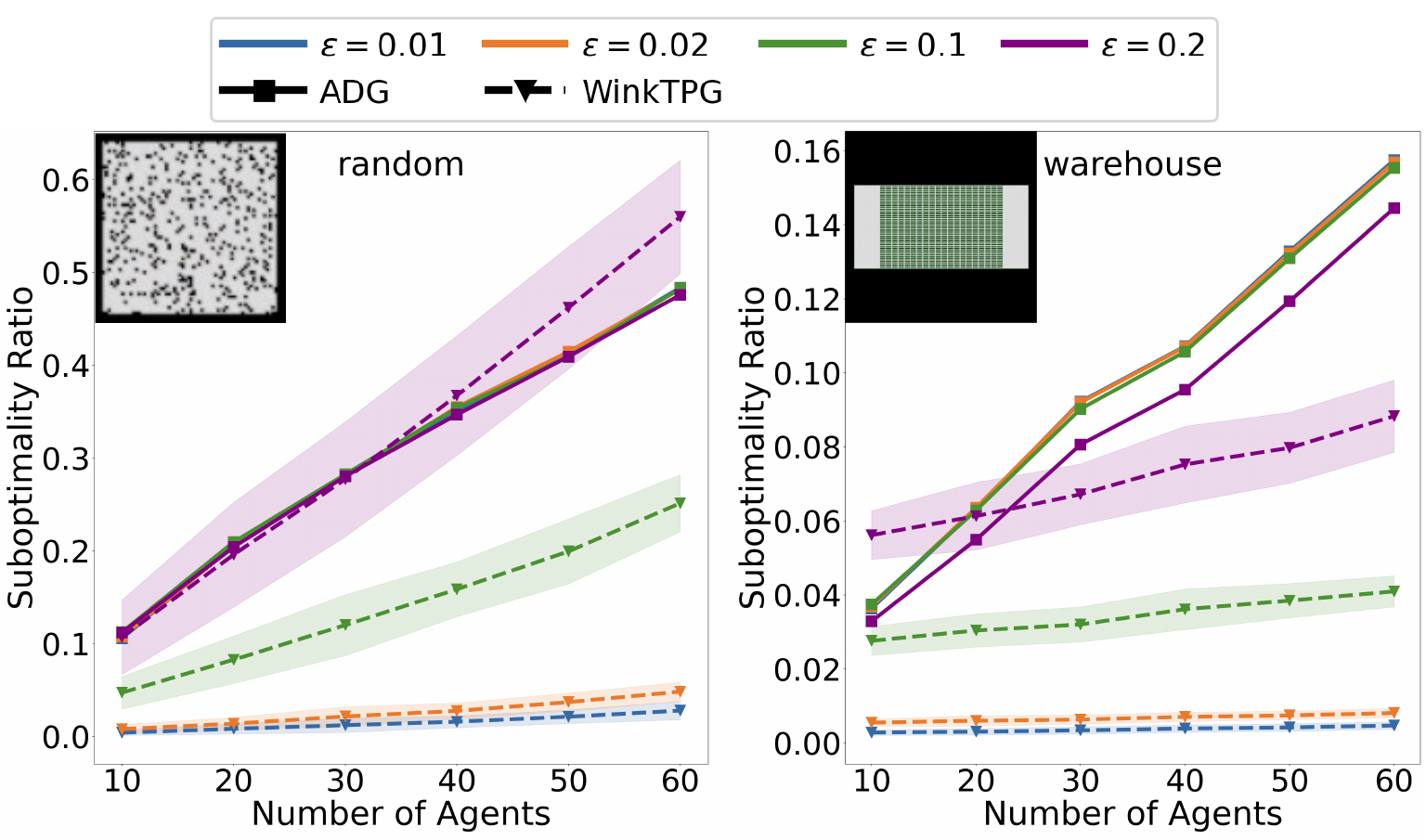}
    \caption{Comparison of solution quality between ADG (solid lines) and \mymethod (dashed lines) under the bounded uniform noise model across different timing bound $\epsilon$ (indicated by color).}
    \label{fig:stn_results}
\end{figure}

\noindent
\emph{Results under Bounded Uniform Noise Model.}
For the bounded uniform noise model, we vary the timing bound
$\epsilon \in \{0.01, 0.02, 0.1, 0.2\}\, s$.
\Cref{fig:stn_results} and \cref{table:additional_metrics} show the results. Again, \mymethod achieves deterministic precedence satisfaction for all $\epsilon$ values tested.
As expected, larger $\epsilon$ values lead to increased solution cost due to expanded safety intervals. Nevertheless, \mymethod maintains better coordination than ADG, particularly for moderate uncertainty levels (e.g., $\epsilon=0.02$ and $\epsilon=0.1$), where proactive interval reasoning avoids excessive waiting.

Overall, the results in this subsection show that \mymethod can effectively account for execution timing uncertainty and provide the intended robustness guarantees in practice. Under bounded uncertainty models, it achieves deterministic safety guarantees, while under the Gaussian noise model, the empirical violation rate remains below the theoretical bound. Moreover, \mymethod performs well under moderate uncertainty levels and exhibits a clear efficiency-robustness tradeoff as uncertainty increases. As discussed in \Cref{sec:ktpgu}, the appropriate timing uncertainty model depends on the underlying controller and operating environment. For example, bounded models are suitable for settings where execution timing errors remain within a bounded range.

\begin{figure}
    \centering
    \includegraphics[width=1.\linewidth]{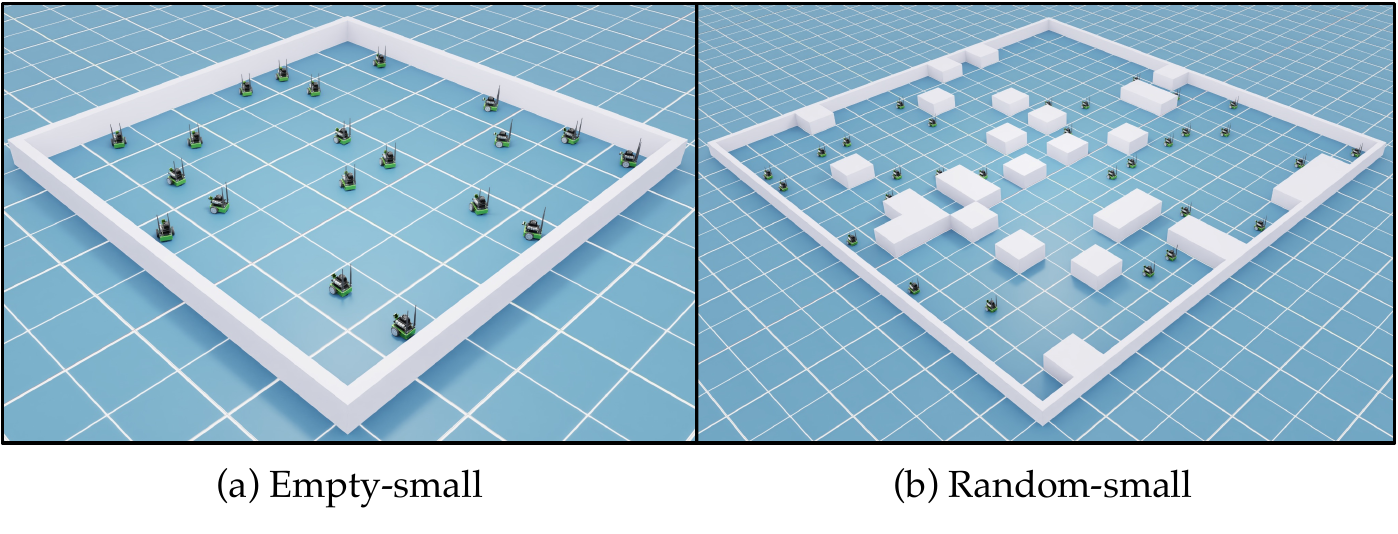}
    \caption{Simulation environments in Isaac-Sim.}
    \label{fig:isaac_sim}
\end{figure}

\subsection{Validation in Realistic Execution Settings}
\label{sec:isaac}

Finally, we validate \mymethod under high-fidelity physics-based simulation and real-world experiments. 
In both settings, we use \mymethod with the Gaussian noise model.

\subsubsection{Physics-Based Simulation}\label{sec:exp:isaac_sim}
We first validate \mymethod using NVIDIA Isaac Sim~\cite{nvidiaisaacsim}.
This setting explicitly models robot dynamics, control latency, and contact interactions, allowing us to assess whether the speed profiles produced by \uncertainPlanner can be reliably executed by realistic robot platforms.
We evaluate this setting on an Ubuntu 22.04 desktop machine with AMD 5800x processor and NVIDIA RTX 2080ti GPU.

\paragraph{Experiment Setup}
We evaluate our method in two simulated environments: \texttt{empty-small} (shown in~\cref{fig:isaac_sim}(a), size $8 \times 8$) and \texttt{random-small} (shown in~\cref{fig:isaac_sim}(b), size $16 \times 16$), both converted from 2-D grid maps. In both environments, each grid cell has a physical size of $0.5 \times 0.5\,\text{m}$. We use a differential-drive JetBot robot model with a diameter of $0.3\,\text{m}$. Each robot is subject to a linear speed limit of $[0, 1]\,\text{m/s}$, a linear acceleration limit of $[-0.5, 0.5]\,\text{m/s}^2$, and an angular velocity limit of $[0, \pi/4]\,\text{rad/s}$.
A simple feedforward–feedback velocity controller is used to execute speed profiles produced by \uncertainPlanner. At each control step, the feedforward term computes the reference linear and angular velocities corresponding to the current time from the planned speed profile, while the feedback term computes the pose-tracking error between the reference pose implied by the planned trajectory and the robot’s actual pose. The control command is given by
$u_i(t) = u_i^{\mathrm{ff}}(t) + K\, e_i(t)$,
where $u_i^{\mathrm{ff}}(t)$ denotes the feedforward velocity command, $e_i(t)$ denotes the pose-tracking error, and K is a feedback gain matrix. The resulting control commands are applied via Isaac Sim’s articulation interface.
To quantify tracking performance, we measure the timing error at the waypoints,
$\Delta t_i(q) = \bigl| t_i^{\text{sim}}(q) - t_i^{r}(q) \bigr|$,
where $t_i^{r}(q)$ is the planned arrival time determined by the speed profile and $t_i^{\text{sim}}(q)$ is the actual arrival time observed during execution.
Empirically, the mean timing error is around 0.25 s, with maximum errors around 0.5 s.
We use only simple parameter tuning for the controller; more advanced control
strategies could further reduce tracking error but are beyond the scope of this work.
We use 20 and 30 agents in the \texttt{empty-small} and \texttt{random-small} environments, respectively. For each environment, we evaluate 15 different MAPF plans generated by PBS under progressively increased uncertainty levels, with $\sigma \in \{0.0, 0.001, 0.005, 0.01, 0.05\}$.
A run is considered \emph{successful} if all agents reach their goals without any collision and no precedence violation happens. We report 
(1) the success rate over all trials,
(2) and the sum of execution time over successful runs.

\begin{figure}
    \centering
    \includegraphics[width=1.\linewidth]{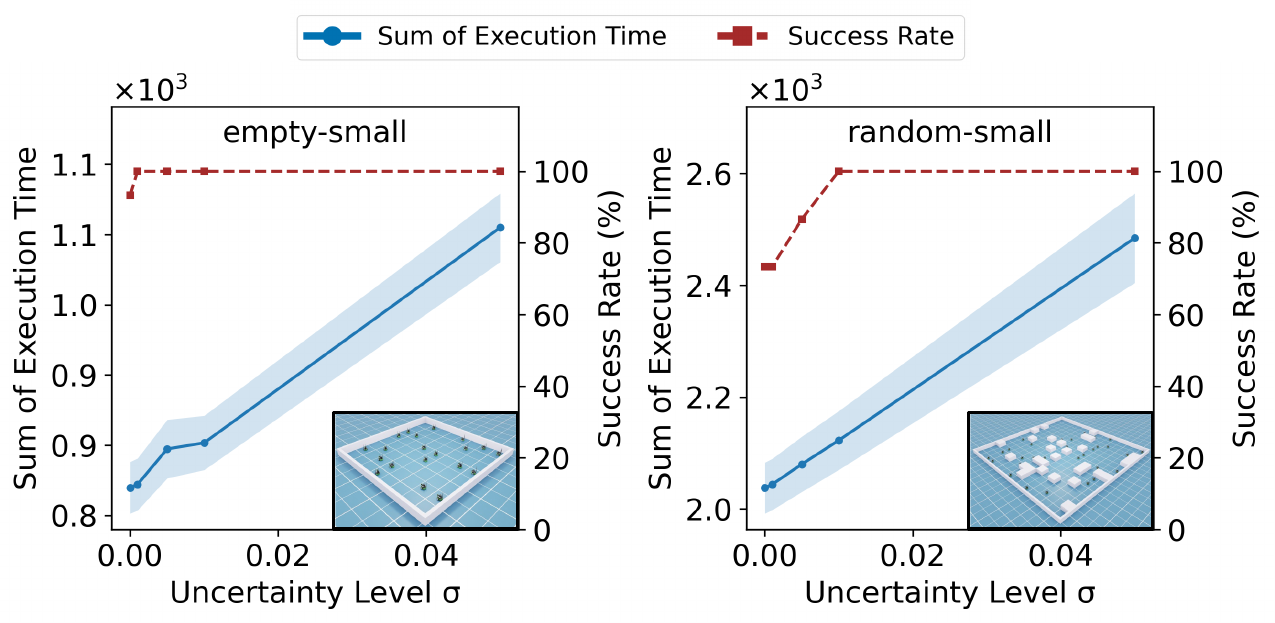}
    \caption{Sum of execution time and success rate as a function of uncertainty level $\sigma$ in physics-based simulation.}
    \label{fig:isaac_sim_exp}
\end{figure}

\paragraph{Experimental Results}
As shown in~\cref{fig:isaac_sim_exp}, in both environments the success rate increases monotonically with the modeled uncertainty level. Higher uncertainty induces more conservative speed profiles and larger safety margins, thereby improving robustness to execution variability in physics-based simulation.
This improved robustness comes at the cost of solution quality: as the uncertainty level increases, execution time grows noticeably, revealing a fundamental trade-off between robustness and efficiency. Consequently, the uncertainty level should be selected based on the target application and controller characteristics. Larger margins are appropriate for safety-critical scenarios or controllers with high variance, whereas smaller margins may suffice when higher efficiency is desired and execution can be tracked accurately.

\begin{figure}
    \centering
    \includegraphics[width=1.\linewidth]{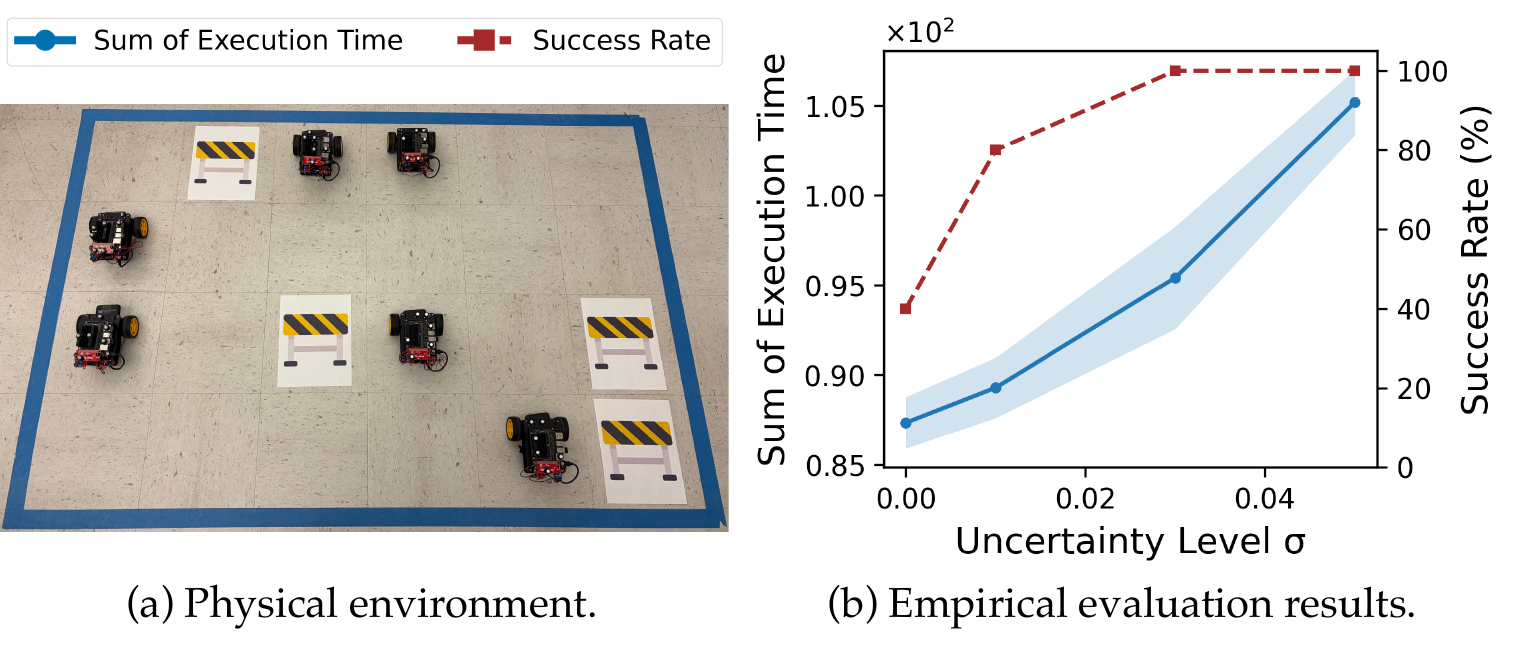}
    \caption{Real-world robot experiments. The red line denotes the success rate, and the blue line denotes the sum of execution time.}
    \label{fig:realworld-exp}
\end{figure}

\subsubsection{Physical Robot Experiments}
Finally, we demonstrate the execution of \mymethod on real-world mobile robots.

\paragraph{Experiment Setup}
We deploy \mymethod on a team of 6 JetBot robots, consistent with the setup described in~\Cref{sec:exp:isaac_sim}, operating in a structured indoor environment as shown in~\Cref{fig:realworld-exp}. 
The workspace is configured as a $4\times6$ grid, where each cell has a size of $0.3\,\text{m} \times 0.3\,\text{m}$. Both obstacles and workspace boundaries are physically defined.
Robots communicate with a centralized coordination server over a local wireless network.
A Vicon motion capture system provides real-time localization for all robots during execution. Similarly, we evaluate 5 different MAPF plans generated by PBS with progressively increasing uncertainty levels, with $\sigma \in \{0.0, 0.01, 0.02, 0.05\}$.

\paragraph{Experimental Results}
As shown in~\Cref{fig:realworld-exp}, \mymethod executes reliably on physical robots under realistic sensing, actuation, and communication conditions. 
Increasing $\sigma$ leads to larger safety margins and more conservative speed profiles, resulting in longer execution times but improved reliability. 
The success rate stabilizes at 100\% once sufficient margin is introduced, confirming that the proposed safety mechanism effectively compensates for execution uncertainty.
These results demonstrate that the framework transfers from simulation to physical robots and exhibits the expected robustness-efficiency trade-off in practice.

\section{Conclusion}
In this paper, we introduced \mymethod, an execution framework for efficiently and robustly executing MAPF plans in environments with execution timing uncertainty.
\mymethod uses our proposed kinodynamic planning method \planner to generate collision-free speed profiles for all agents. It incorporates location updates from agents to replan dynamically, while providing deterministic guarantees under bounded delays and probabilistic guarantees under stochastic models.
Empirically, we evaluated our methods in the MAPF domain, showing up to 51.7\% improvement in solution quality compared to existing methods, while generating speed profiles for up to 1,000 agents in 1 second.
Under multiple uncertainty models, \mymethod achieved the intended robustness guarantees in practice. The results also showed that \mymethod outperforms ADG under moderate uncertainty.
We further validated the practical applicability of \mymethod through experiments in a high-fidelity physics-based simulator and real-world mobile robots, confirming that the generated speed profiles are physically executable and robust to execution variability.
Future work includes exploring the integration of \mymethod with advanced MAPF replanning techniques such as rolling-horizon conflict resolution~\cite{li2021lifelong} or switchable TPG~\cite{feng2024real}.
Another important direction is to extend the uncertainty analysis beyond the current setting. This includes heavy-tailed disturbances, correlated delays arising from shared infrastructure, and adversarial failures, none of which are explicitly modeled in the present framework.
Finally, extending \mymethod to incorporate decentralized planning and execution mechanisms could further enhance its scalability and robustness in highly dynamic and large-scale scenarios.

 

\bibliographystyle{IEEEtran}
\bibliography{reference}

@string{ICAPS = {Proceedings of the International Conference on Automated Planning and Scheduling}}

@string{AAAI = {Proceedings of the AAAI Conference on Artificial Intelligence}}

@string{ICRA = {Proceedings of the IEEE International Conference on Robotics and Automation}}

@string{IROS = {Proceedings of the IEEE International Conference on Intelligent Robots and Systems}}

@string{SoCS = {Proceedings of the International Symposium on Combinatorial Search}}

@string{RAL = {IEEE Robotics and Automation Letters}}

@article{wen2022cl,
  title={{CL-MAPF}: Multi-agent path finding for car-like robots with kinematic and spatiotemporal constraints},
  author={Wen, Licheng and Liu, Yong and Li, Hongliang},
  journal={Robotics and Autonomous Systems},
  volume={150},
  pages={103997},
  year={2022},
  publisher={Elsevier}
}

@inproceedings{saccon2022comparing,
  title={Comparing multi-agent path finding algorithms in a real industrial scenario},
  author={Saccon, Enrico and Palopoli, Luigi and Roveri, Marco},
  booktitle={Proceedings of the International Conference of the Italian Association for Artificial Intelligence},
  pages={184--197},
  year={2022}
}

@article{van2011lqg,
  title={{LQG-MP}: Optimized path planning for robots with motion uncertainty and imperfect state information},
  author={Van Den Berg, Jur and Abbeel, Pieter and Goldberg, Ken},
  journal={The International Journal of Robotics Research},
  volume={30},
  number={7},
  pages={895--913},
  year={2011},
}

@article{yan2025advancing,
  title={Advancing {MAPF} toward the real world: A scalable multi-agent realistic testbed ({SMART})},
  author={Yan, Jingtian and Li, Zhifei and Kang, William and Zheng, Kevin and Zhang, Yulun and Chen, Zhe and Zhang, Yue and Harabor, Daniel and Smith, Stephen F and Li, Jiaoyang},
  journal=RAL,
  year={2026},
  note={in press}
}

@inproceedings{lehoux2024multi,
  title={Multi-agent path finding with real robot dynamics and interdependent tasks for automated warehouses},
  author={Lehoux-Lebacque, Vassilissa and Silander, Tomi and Loiodice, Christelle and Lee, Seungjoon and Wang, Albert and Michel, Sofia},
  booktitle={Proceedings of the European Conference on Artificial Intelligence},
  year={2024},
  pages={4393--4401}
}

@article{bartak2019multi,
  title={Multi-Agent Path Finding on Real Robots},
  author={Bart{\'a}k, Roman and {\v{S}}vancara, Ji{\\v{r}}{\\'\\i} and Skopkova, Veronika and Nohejl, David and Krasicenko, Ivan},
  journal={AI Communications},
  volume={32},
  number={3},
  pages={175--189},
  year={2019}
}

@inproceedings{honig2022db,
  title={db-{A}*: Discontinuity-bounded search for kinodynamic mobile robot motion planning},
  author={H{\"o}nig, Wolfgang and Ortiz-Haro, Joaquim and Toussaint, Marc},
  booktitle=IROS,
  pages={13540--13547},
  year={2022}
}

@article{lipp2014minimum,
  title={Minimum-time speed optimisation over a fixed path},
  author={Lipp, Thomas and Boyd, Stephen},
  journal={International Journal of Control},
  volume={87},
  number={6},
  pages={1297--1311},
  year={2014},
  publisher={Taylor \& Francis}
}

@inproceedings{ma2017feasibility,
  title={Feasibility study: {M}oving non-homogeneous teams in congested video game environments},
  author={Ma, Hang and Yang, Jingxing and Cohen, Liron and Kumar, TK and Koenig, Sven},
  booktitle={Proceedings of the AAAI Conference on Artificial Intelligence and Interactive Digital Entertainment},
  volume={13},
  number={1},
  pages={270--272},
  year={2017}
}

@article{berndt2023receding,
  title={Receding horizon re-ordering of multi-agent execution schedules},
  author={Berndt, Alexander and Van Duijkeren, Niels and Palmieri, Luigi and Kleiner, Alexander and Keviczky, Tam{\'a}s},
  journal={IEEE Transactions on Robotics},
  volume={40},
  pages={1356--1372},
  year={2023}
}

@misc{nvidiaisaacsim,
  author = {NVIDIA},
  title = {{I}saac {S}im},
  year = {2023},
  howpublished = {\url{https://developer.nvidia.com/isaac-sim}},
  note = {Accessed: 2025-07-10}
}

@inproceedings{zhang2021temporal,
  title={Temporal reasoning with kinodynamic networks},
  author={Zhang, Han and Tiruviluamala, Neelesh and Koenig, Sven and Kumar, TK Satish},
  booktitle=ICAPS,
  volume={31},
  pages={415--425},
  year={2021}
}

@inproceedings{li2023intersection,
  title={Intersection coordination with priority-based search for autonomous vehicles},
  author={Jiaoyang Li and The Anh Hoang and Eugene Lin and Hai L. Vu and Sven Koenig},
  booktitle=AAAI,
  volume={37},
  pages={11578--11585},
  year={2023}
}

@inproceedings{ma2019searching,
  title={Searching with consistent prioritization for multi-agent path finding},
  author={Ma, Hang and Harabor, Daniel and Stuckey, Peter J and Li, Jiaoyang and Koenig, Sven},
  booktitle=AAAI,
  volume={33},
  pages={7643--7650},
  year={2019}
}

@inproceedings{ali2023safe,
  title={Safe Interval Path Planning with Kinodynamic Constraints},
  author={Ali, Zain Alabedeen and Yakovlev, Konstantin},
  booktitle=AAAI,
  volume={37},
  pages={12330--12337},
  year={2023}
}

@inproceedings{honig2016multi,
  title={Multi-agent path finding with kinematic constraints},
  author={H{\"o}nig, Wolfgang and Kumar, T. K. Satish and Cohen, Liron and Ma, Hang and Xu, Hong and Ayanian, Nora and Koenig, Sven},
  booktitle=ICAPS,
  volume={26},
  pages={477--485},
  year={2016}
}

@inproceedings{Stern2019benchmark,
  author    = {Roni Stern and Nathan R. Sturtevant and Ariel Felner and Sven Koenig and Hang Ma and Thayne T. Walker and Jiaoyang Li and Dor Atzmon and Liron Cohen and T. K. Satish Kumar and Eli Boyarski and Roman Bart{\'{a}}k},
  title     = {Multi-Agent Pathfinding: Definitions, Variants, and Benchmarks},
  booktitle = SoCS,
  pages     = {151--159},
  year      = {2019}
}

@ARTICLE{honig2019warehouse,
  author={H\"{o}nig, Wolfgang and Kiesel, Scott and Tinka, Andrew and Durham, Joseph W. and Ayanian, Nora},
  journal=RAL, 
  title={Persistent and robust execution of {MAPF} schedules in warehouses}, 
  year={2019},
  volume={4},
  number={2},
  pages={1125--1131}}

@inproceedings{li2021eecbs,
  title={{EECBS}: A bounded-suboptimal search for multi-agent path finding},
  author={Li, Jiaoyang and Ruml, Wheeler and Koenig, Sven},
  booktitle=AAAI,
  volume={35},
  pages={12353--12362},
  year={2021}
}

@inproceedings{li2021lifelong,
  title={Lifelong multi-agent path finding in large-scale warehouses},
  author={Li, Jiaoyang and Tinka, Andrew and Kiesel, Scott and Durham, Joseph W and Kumar, TK Satish and Koenig, Sven},
  booktitle=AAAI,
  volume={35},
  pages={11272--11281},
  year={2021}
}

@ARTICLE{yan2024PSB,
  author={Yan, Jingtian and Li, Jiaoyang},
  journal=RAL, 
  title={Multi-Agent Motion Planning With Bézier Curve Optimization Under Kinodynamic Constraints}, 
  year={2024},
  volume={9},
  number={3},
  pages={3021--3028}}

@article{okumura2022priority,
  title={Priority inheritance with backtracking for iterative multi-agent path finding},
  author={Okumura, Keisuke and Machida, Manao and D{\'e}fago, Xavier and Tamura, Yasumasa},
  journal={Artificial Intelligence},
  volume={310},
  pages={103752},
  year={2022}
}

@inproceedings{moldagalieva2024db,
  title={db-{CBS}: {D}iscontinuity-bounded conflict-based search for multi-robot kinodynamic motion planning},
  author={Moldagalieva, Akmaral and Ortiz-Haro, Joaquim and Toussaint, Marc and H{\"o}nig, Wolfgang},
  booktitle=ICRA,
  pages={14569--14575},
  year={2024}
}

@inproceedings{andreychuk2021improving,
  title={Improving continuous-time conflict based search},
  author={Andreychuk, Anton and Yakovlev, Konstantin and Boyarski, Eli and Stern, Roni},
  booktitle=AAAI,
  volume={35},
  pages={11220--11227},
  year={2021}
}

@article{peltzer2020stt,
  title={{STT-CBS}: {A} conflict-based search algorithm for multi-agent path finding with stochastic travel times},
  author={Peltzer, Oriana and Brown, Kyle and Schwager, Mac and Kochenderfer, Mykel J and Sehr, Martin},
  journal={arXiv preprint arXiv:2004.08025},
  year={2020}
}

@article{atzmon2020robust,
  title={Robust multi-agent path finding and executing},
  author={Atzmon, Dor and Stern, Roni and Felner, Ariel and Wagner, Glenn and Bart{\'a}k, Roman and Zhou, Neng-Fa},
  journal={Journal of Artificial Intelligence Research},
  volume={67},
  pages={549--579},
  year={2020}
}

@inproceedings{morris2016planning,
  title={Planning, scheduling and monitoring for airport surface operations},
  author={Morris, Robert and Pasareanu, Corina S and Luckow, Kasper and Malik, Waqar and Ma, Hang and Kumar, TK Satish and Koenig, Sven},
  booktitle={AAAI Workshop: Planning for Hybrid Systems},
  year={2016}
}

@article{wurman2008coordinating,
  title={Coordinating hundreds of cooperative, autonomous vehicles in warehouses},
  author={Wurman, Peter R and D'Andrea, Raffaello and Mountz, Mick},
  journal={AI magazine},
  volume={29},
  number={1},
  pages={9--19},
  year={2008}
}

@inproceedings{feng2024real,
  title={A real-time rescheduling algorithm for multi-robot plan execution},
  author={Feng, Ying and Paul, Adittyo and Chen, Zhe and Li, Jiaoyang},
  booktitle=ICAPS,
  volume={34},
  pages={201--209},
  year={2024}
}

@article{dechter1991temporal,
  title={Temporal constraint networks},
  author={Dechter, Rina and Meiri, Itay and Pearl, Judea},
  journal={Artificial intelligence},
  volume={49},
  number={1-3},
  pages={61--95},
  year={1991}
}

@book{casella2024statistical,
  title={Statistical inference},
  author={Casella, George and Berger, Roger},
  year={2024},
  publisher={CRC press}
}

@inproceedings{liu2017speed,
  title={Speed profile planning in dynamic environments via temporal optimization},
  author={Liu, Changliu and Zhan, Wei and Tomizuka, Masayoshi},
  booktitle={Proceedings of the IEEE Intelligent Vehicles Symposium},
  pages={154--159},
  year={2017}
}

\begin{IEEEbiography}[{\includegraphics[width=1in]{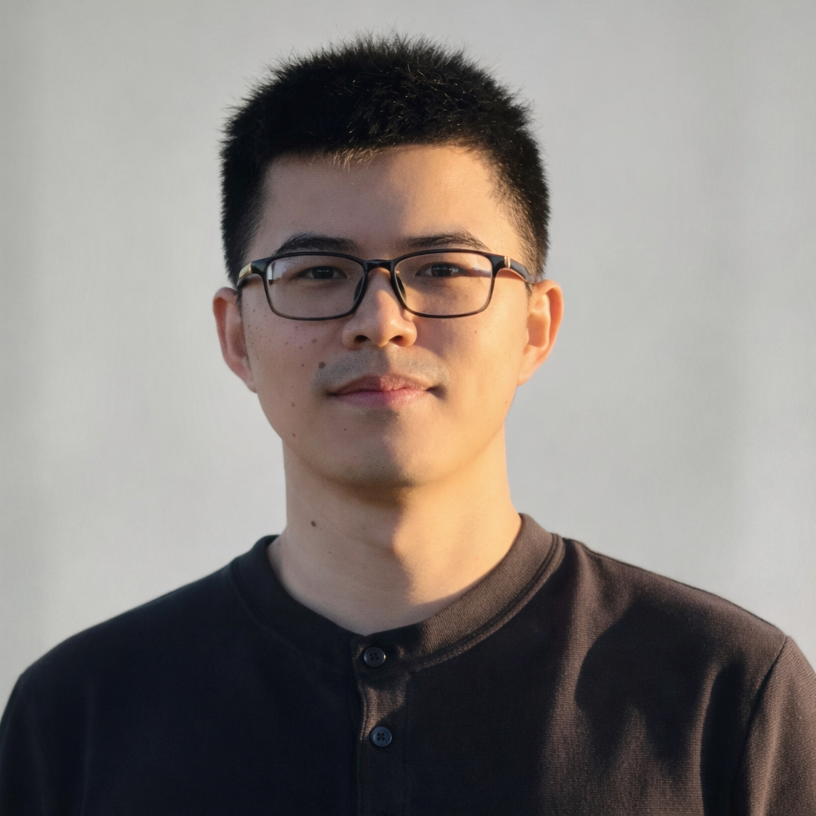}}]{Jingtian Yan}
(Graduate Student Member, IEEE) received the B.Eng. degrees from Zhejiang University, China, and the M.S. degree from Carnegie Mellon
University, Pittsburgh, PA, USA, where he is currently pursuing the Ph.D. degree. His research interests lie in Multi-Agent Path Finding, robotic execution under uncertainty, and large-scale multi-robot systems.
\end{IEEEbiography}

\begin{IEEEbiography}[{\includegraphics[width=1in]{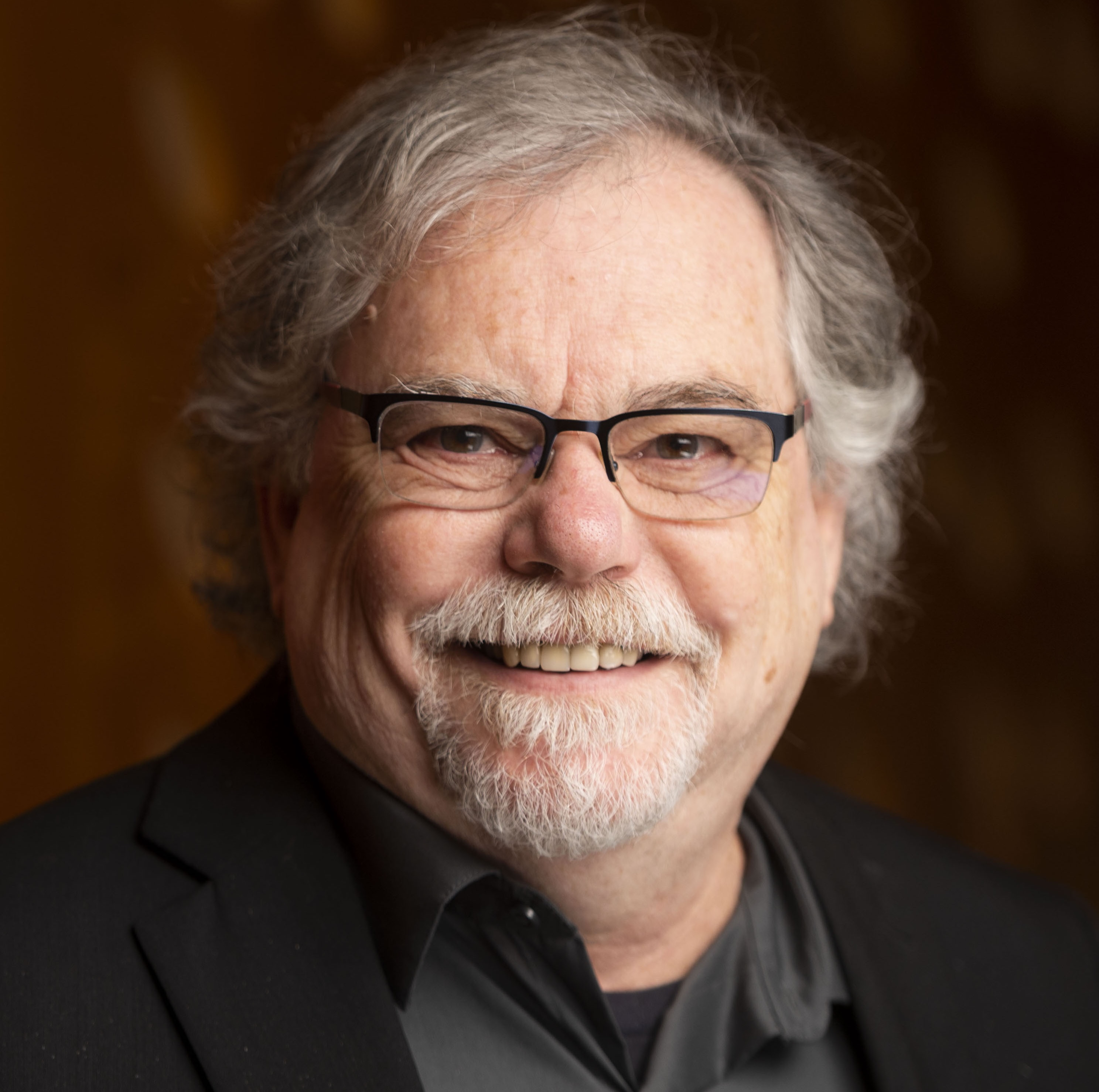}}]{Stephen F. Smith} (Member, IEEE) received the Ph.D. degree in Computer Science from the University of Pittsburgh in 1980. He joined the faculty of the Robotics Institute at Carnegie Mellon University in 1982, where he is currently a Research Professor. His research focuses broadly on the theory and practice of next generation systems for automated planning, scheduling and control of large multi-actor systems.
\end{IEEEbiography}

\begin{IEEEbiography}[{\includegraphics[width=1in]{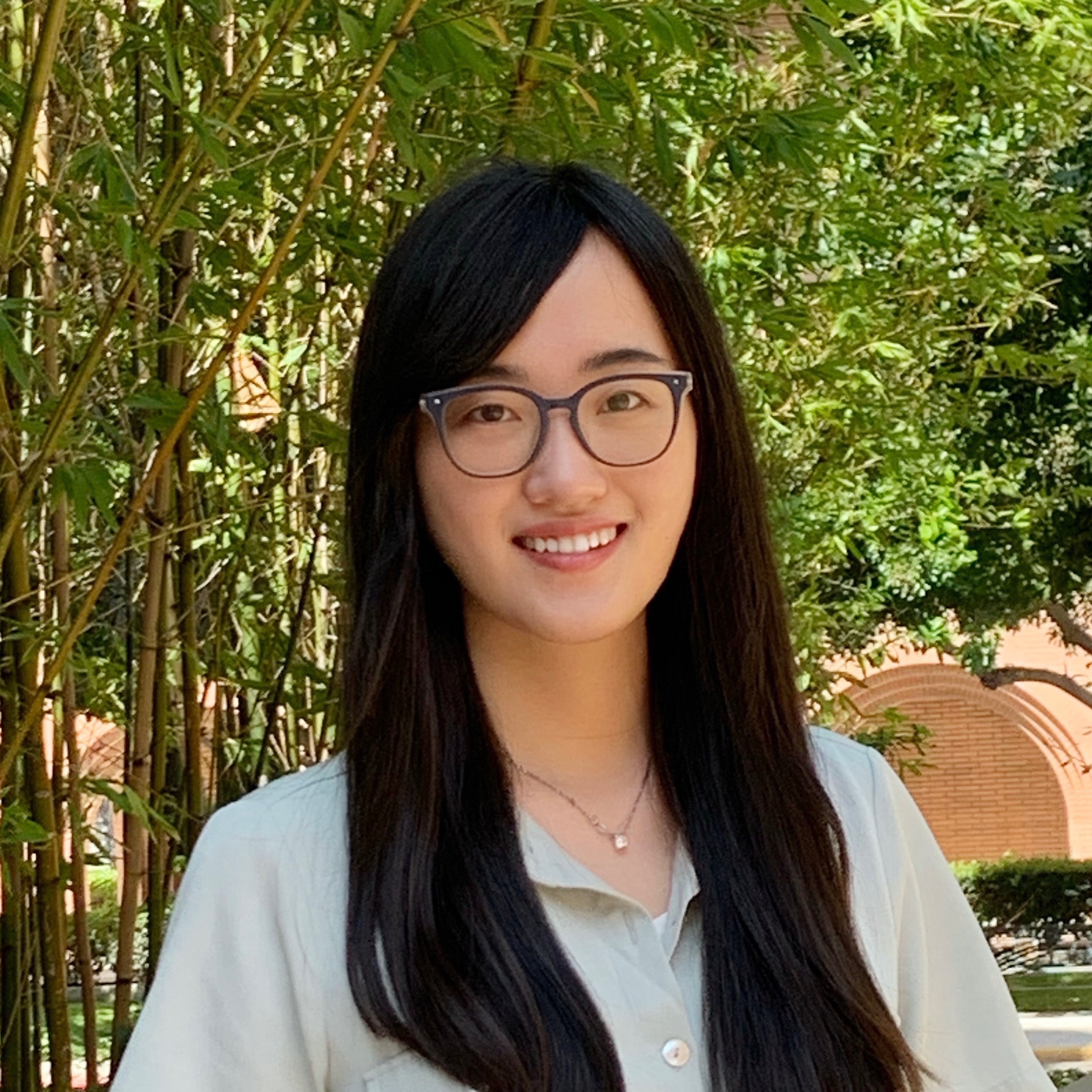}}]{Jiaoyang Li} (Member, IEEE) received the Ph.D. degree in Computer Science from the University of Southern California, USA, in 2022 and the B.Eng. degree in Automation from Tsinghua University, China, in 2017. She is currently an Assistant Professor in the Robotics Institute of Carnegie Mellon University, USA. Her research interests lie in multi-robot planning and coordination.
\end{IEEEbiography}

\end{document}